\newtheorem{theorem}{Theorem}
\newtheorem{lemma}{Lemma}
\newtheorem{remark}{Remark}
\newtheorem{corollary}{Corollary}
\def\BibTeX{{\rm B\kern-.05em{\sc i\kern-.025em b}\kern-.08em
    T\kern-.1667em\lower.7ex\hbox{E}\kern-.125emX}}
\begin{document}

\title{Linear Multiple Low-Rank Kernel Based Stationary Gaussian Processes Regression for Time Series}

\author{\IEEEauthorblockN{Feng Yin, Lishuo Pan, Xinwei He, Tianshi Chen, Sergios Theodoridis, Zhi-Quan (Tom) Luo\\} \thanks{The conference version of this paper \cite{Yin18} has been published in the proceedings of 21st International Conference on Information Fusion (FUSION), University of Cambridge, Cambridge, UK, July, 2018.}
	\IEEEauthorblockA{School of Science and Engineering, The Chinese University of Hong Kong, Shenzhen\\ Shenzhen Research Institute of Big Data (SRIBD) \\  Longxiang blvd 2001, Longgang district, Shenzhen, China, 518172.}}

\maketitle

\begin{abstract}
Gaussian processes (GP) for machine learning have been studied systematically over the past two decades and they are by now widely used in a number of diverse applications. However, GP kernel design and the associated hyper-parameter optimization are still hard and to a large extend open problems. In this paper, we consider the task of GP regression for time series modeling and analysis. The underlying stationary kernel can be approximated arbitrarily close by a new proposed grid spectral mixture (GSM) kernel, which turns out to be a linear combination of low-rank sub-kernels. In the case where a large number of the sub-kernels are used, either the Nystr\"{o}m or the random Fourier feature approximations can be adopted to deal efficiently with the computational demands. The unknown GP hyper-parameters consist of the non-negative weights of all sub-kernels as well as the noise variance; their estimation is performed via the maximum-likelihood (ML) estimation framework. Two efficient numerical optimization methods for solving the unknown hyper-parameters are derived, including a sequential majorization-minimization (MM) method and a non-linearly constrained alternating direction of multiplier method (ADMM). The MM matches perfectly with the proven low-rank property of the proposed GSM sub-kernels and turns out to be a part of efficiency, stable, and efficient solver, while the ADMM has the potential to generate better local minimum in terms of the test MSE. Experimental results, based on various classic time series data sets, corroborate that the proposed GSM kernel-based GP regression model outperforms several salient competitors of similar kind in terms of prediction mean-squared-error and numerical stability. 
\end{abstract}

\begin{IEEEkeywords}
ADMM, Gaussian processes, hyper-parameter optimization, majorization-minimization, linear multiple kernel, low-rank kernels, prediction, time series.
\end{IEEEkeywords}

\section{Introduction}
\label{sec:Introduction}
Gaussian processes (GP) constitute a class of important Bayesian non-parametric models for machine learning and they are tightly connected to several other popular models, such as support vector machines (SVM), regularized-least-squares, relevance vector machines and auto-regressive-moving-average (ARMA), single-layer Bayesian neural networks \cite{RW06} and, more recently, to the deep neural networks \cite{Lee18, Mattews18}. Gaussian processes are also used as outstanding surrogate functions for Bayesian optimization nowadays \cite{Snoek12}. The idea behind the GP models is to impose a Gaussian prior on the underlying function/system and then compute the posterior distribution over the function given the observed data. GP models have been used in a plethora of applications due to their outstanding performance in function approximation with a natural uncertainty bound. 

Gaussian processes models are simple in terms of mathematical formulation and analysis thanks to the underlying Gaussian assumption. However, like other kernel methods, such as support vector machines, one major problem with GP models lies in the selection of an appropriate kernel function. It is well known that a good kernel function is capable of lifting the raw features to a much higher (even infinite) dimensional space, where regression and classification can be done more effectively, e.g., \cite{Theodoridis15}. In practice, kernel selection is often done subjectively, relying on eye inspection of data patterns and a handful of elementary kernels such as the linear kernel, squared-exponential (SE) kernel, Mat\'{e}rn kernel and their hybrid are popular alternatives. For instance, the SE kernel was used for sport trajectory modeling in \cite{Zhao16} and for financial data modeling and prediction in \cite{Han16}, while linear and Mat\'{e}rn kernels were used for energy load forecasting in \cite{Prakash18}, to mention a few in different sectors, even though the selected kernel may not fit the data well.

In order to bypass the need for human intervention, automatic and optimal kernel design is largely demanded. One option is to resort to multiple kernel learning techniques. Multiple kernel refers to learning a linear or nonlinear combination of primitive kernels systematically for a target machine learning (supervised, un-supervised, etc.) model, via a specific optimization method with the goal to let data determine the best kernel configuration. This idea has been implemented mostly based on linear multiple kernel (LMK) for supervised SVM models \cite{Gonen11, Lanckriet04}, for supervised regularized least-squares models \cite{Aravkin14}, and for un-supervised data clustering \cite{Zhuang11}, etc. The idea of mixing elementary kernels for Gaussian process regression also exists, e.g., for $\textrm{CO}_{2}$ prediction in \cite{RW06} and for other prediction tasks in a few recent works \cite{Senanayake2016, Xu17, Prakash18}. However, the main drawback is that the primitive kernels are selected subjectively and primitively combined with equal weights. In other words, the weights were pre-selected and not learnt via optimizing a performance metric, and the resulting simple equal-weighted linear combination of primitive kernels may be way sub-optimal for fitting the given data.

There also exist some competing universal kernel design methods. In \cite{Gredilla10}, Lazaro-Gredilla \textit{et.al.}~proposed a sparse spectrum Gaussian process (SSGP) that extends the linear trigonometric Bayesian model. The spectral density of a stationary covariance kernel is sparsified to approximate the standard GP. The SSGP learns the model hyper-parameters, including the spectral points, precision of a prior, noise variance as well as the lengthscales of the automatic relevance determination (ARD) kernel via maximizing the marginal likelihood with the conjugate gradient method. In \cite{DLG13}, Duvenaud \textit{et.al.}~defined a space of kernel structures built compositionally by adding and multiplying a small number of primitive kernels and search for the optimal combination over the space. In \cite{WA13, Wilson14}, Wilson \textit{et.al.}~proposed a spectral mixture (SM) kernel with the idea to approximate the spectral density with a Gaussian mixture model first in the frequency domain and transform it back into the time domain.

The predictive performance of GP regression depends on the goodness of the model parameters, often referred to as hyper-parameters. There exist two classes of methods for tuning the GP hyper-parameters. The class of deterministic methods consists of the maximum likelihood (ML) estimation based method and cross-validation (CV) based method among others \cite{RW06, Krauth17}. The class of stochastic methods includes for instance the hybrid Monte-Carlo and Markov chain Monte-Carlo (MCMC) sampling methods \cite{Neal97, Neal13}. In this paper, we follow the deterministic ML based method that is more widely used in the GP community.

The main contributions of this work are the following:
\begin{itemize}
\item Based on the assumption that there exists a true kernel and moreover it is stationary, we propose a novel grid spectral mixture (GSM) kernel for time series modeling and analysis. The GSM kernel simply modifies the original spectral mixture kernel \cite{Wilson14} by fixing the frequency and variance parameters to a set of pre-selected grids while leaving only the weights to be optimized. 
\item As a major contribution, the resulting GSM kernel belongs to the class of linear multiple kernels, and the associated sub-kernels are proven to have low-rank property under reasonable conditions. Moreover, by fixing the grids, the ML based hyper-parameter optimization task becomes equivalent to a difference-of-convex problem with nicer structure to be dealt with. When the proposed GSM kernel contains a large number of sub-kernels, we propose to apply Nystr\"{o}m or random Fourier feature approximation for saving in computational complexity and storage requirements.
\item As another major contribution, we derive two effective numerical methods for tuning the GP hyper-parameters. The first method is a sequential majorization-minimization (MM) method, and the second one is an non-linearly constrained alternating direction of multiplier method (ADMM). The former method turns out to be very fast and stable, while the latter method has the potential to achieve a better local minimum in the sense of achieving smaller prediction mean-squared-error (MSE). For both methods, the solution turns out to be sparse, which is a welcome feature in the context of data overfitting problem.
\item Tests based on eight standard time series data sets in various aspects verify that the proposed GSM kernel for GP modeling, empowered with an efficient hyper-parameter optimization approach, is able to achieve much improved prediction performance and robustness as compared to other competing GP models of similar kind. 
\end{itemize}

The remainder of this paper is organized as follows. Section~\ref{sec:ReviewGP} provides the backgroud about Gaussian process regression, the classic ML based hyper-parameter optimization, and the linear multiple kernel. Section~\ref{sec:GSMKernel} first reviews the SM kernel, followed by a new GSM kernel, which turns out to be a linear multiple kernel. Section~\ref{sec:GSM-Approx} introduces the Nystr\"{o}m and random Fourier feature approximation of the GSM sub-kernels for computational and memory savings. Section~\ref{sec:hyperpara-opt-GSMKernel} first presents the ML based hyper-parameter estimation problem for large scale linear multiple kernel, including the proposed GSM kernel and it further presents two numerical optimization methods, namely a sequential MM method and an ADMM method. Experimental results are given in Section~\ref{sec:results}. Finally, Section~\ref{sec:Conclusion} concludes this paper. Proofs of some important properties of the GSM kernel are given in Appendix. 

\textit{Notation:} Throughout this paper, matrices are presented with boldface uppercase letters, vectors with boldface lowercase letters, and scalars with normal lowercase letters. We use $\mathbb{R}$ to denote the set of real numbers. The operator $\left[\cdot\right]^{T}$ stands for vector/matrix transpose, $\textrm{tr}(\cdot)$ for trace of a square matrix, $\textrm{rank}(\cdot)$ for rank of a matrix, $\parallel \cdot \parallel_{p}$ for $L_p$ norm of a vector and $\parallel \cdot \parallel_{F}$ for the Frobenius norm of a matrix, $\mathbb{E}_{p(x)}(\cdot)$ for the expectation taken with respect to the probability density function (PDF) $p(x)$, $\nabla_{\bm{\theta}}$ for gradient, $\mathcal{N}(v; \mu, \sigma^{2})$ for Gaussian distribution of a random variable $V$ with mean $\mu$ and variance $\sigma^{2}$, $\det(\cdot)$ is determinant of a matrix, $\textrm{erf}(x)$ is Gaussian error function, $[a]_{+}$ takes the maximum between $a$ and zero. Lastly, $\boldsymbol{X} \succeq \boldsymbol{Y}$ means $\boldsymbol{X} - \boldsymbol{Y}$ is positive semi-definite, $\left\langle \boldsymbol{X}, \boldsymbol{Y} \right\rangle$ is the inner product of two square matrices, $\boldsymbol{X} \circ \boldsymbol{Y}$ stands for the Hadamad (point-wise) matrix multiplication of $\boldsymbol{X}$ and $\boldsymbol{Y}$.

\section{Background}
\label{sec:ReviewGP}
In this section, we first review GP regression in subsection~\ref{subsec:GPR} and classic ML based GP hyper-parameter optimization in subsection~\ref{subsec:HyperparaOpt}. Lastly, we introduce linear multiple kernel in subsection~\ref{subsec:MLK}. 
%
%
\subsection{GP Regression}
\label{subsec:GPR}
A Gaussian process is a collection of random variables, any finite subset of which follows a Gaussian distribution \cite{RW06}. In the sequel, we solely focus on scalar output, real-valued Gaussian processes that are completely specified by a mean function and a kernel function (a.k.a. covariance function). Concretely, we express 
\begin{equation}
f(\boldsymbol{x}) \sim \mathcal{GP}(m(\boldsymbol{x}), k(\boldsymbol{x}, \boldsymbol{x}'; \boldsymbol{\theta}_{h})),
\end{equation} 
where $m(\boldsymbol{x}) $ is the mean function, which is often set to zero in practice, especially when there is no prior knowledge available; and $k(\boldsymbol{x}, \boldsymbol{x}'; \boldsymbol{\theta}_{h})$ is the kernel function tuned by the kernel hyper-parameters, $\boldsymbol{\theta}_{h}$.

Let us consider the following GP regression model
\begin{equation}
y = f(\boldsymbol{x}) + e, 
\end{equation}
where $y \in \mathbb{R}$ is a continuous-valued, scalar output; the unknown function $f(\boldsymbol{x}) : \mathbb{R}^{d} \mapsto \mathbb{R}$ is modeled as a zero mean Gaussian process for simplicity; and the noise $e$ is assumed to be Gaussian distributed with zero mean and variance $\sigma_{e}^{2}$. Moreover, the noise terms at different data points are assumed to be mutually independent. The set of all unknown GP hyper-parameters is denoted by $\boldsymbol{\theta} \triangleq [\boldsymbol{\theta}_{h}^{T}, \sigma^{2}_{e}]^T$ and the dimension of $\boldsymbol{\theta}$ is assumed to be $p$.

Given a training data set $\mathcal{D} \triangleq \{\boldsymbol{X}, \boldsymbol{y} \}$, where $\boldsymbol{y} = [y_1, y_2, ..., y_n]^T$ is the vector comprising the outputs and $\boldsymbol{X}=[\boldsymbol{x}_1, \boldsymbol{x}_2,...,\boldsymbol{x}_n]$ is the matrix comprising the input vectors, the aim is to compute the posterior distribution of $\boldsymbol{y}_{*} =  [y_{*,1}, y_{*,2},...,y_{*,n_{*}}]^T$ given the corresponding test inputs $\boldsymbol{X}_{*} = [\boldsymbol{x}_{*,1}, \boldsymbol{x}_{*,2},...,\boldsymbol{x}_{*,n_{*}}]$. Here, we let $\mathcal{D}_{*} \triangleq \{\boldsymbol{X}_{*},  \boldsymbol{y}_{*}\}$ be the test data set. 
According to the definition of Gaussian processes given before, the joint prior distribution of the training output $\boldsymbol{y}$ and test output $\boldsymbol{y}_{*}$ can be written explicitly as:
\begin{equation}
\begin{bmatrix} \boldsymbol{y} \\ \boldsymbol{y}_{*} \end{bmatrix} \sim \mathcal{N} \left( \boldsymbol{0},  \begin{bmatrix} \boldsymbol{K}(\boldsymbol{X}, \boldsymbol{X}) + \sigma_{e}^{2} \boldsymbol{I}_{n}, & \!\!\!\!\!\!\!\! \boldsymbol{K}(\boldsymbol{X}, \boldsymbol{X}_{*}) \\ \boldsymbol{K}(\boldsymbol{X}_{*}, \boldsymbol{X}), & \!\!\!\!\!\!\!\! \boldsymbol{K}(\boldsymbol{X}_{*}, \boldsymbol{X}_{*}) + \sigma_{e}^{2} \boldsymbol{I}_{n_{*}} \end{bmatrix} \right), \nonumber 
\end{equation}
where $\boldsymbol{K}(\boldsymbol{X}, \boldsymbol{X})$ is an $n \times n$ matrix of covariances among the training inputs; $\boldsymbol{K}(\boldsymbol{X}, \boldsymbol{X}_{*})$ is an $n \times n_{*}$ matrix of covariances between the training inputs and test inputs; $\boldsymbol{K}(\boldsymbol{X}_{*}, \boldsymbol{X}_{*})$ is an $n_{*} \times n_{*}$ matrix of covariances among the test inputs.
%
%
Here, we let $\boldsymbol{K}(\boldsymbol{X}, \boldsymbol{X})$ be a short term of $\boldsymbol{K}(\boldsymbol{X}, \boldsymbol{X}; \boldsymbol{\theta}_h)$ when the kernel hyper-parameters have been trained and the associated optimization process is not the spotlight. 

Applying the results of conditional Gaussian distribution, we can easily derive the posterior distribution as
\begin{equation}
p(\boldsymbol{y}_{*} \vert \mathcal{D}, \boldsymbol{X}_{*}; \boldsymbol{\theta}_h) \sim
\mathcal{N} \left(  \bar{\boldsymbol{m}} , \bar{\boldsymbol{V}}  \right), 
\end{equation} 
where the posterior mean and posterior variance are respectively,
\begin{align}
\bar{\boldsymbol{m}} & = \boldsymbol{K}(\boldsymbol{X}_{*}, \boldsymbol{X}) \left[ \boldsymbol{K}(\boldsymbol{X}, \boldsymbol{X}) + \sigma_{e}^{2} \boldsymbol{I}_{n} \right]^{-1} \boldsymbol{y}, \\
\bar{\boldsymbol{V}} & = \boldsymbol{K}(\boldsymbol{X}_{*}, \boldsymbol{X}_{*}) + \sigma_{e}^{2} \boldsymbol{I}_{n_{*}} \nonumber \\ 
&- \boldsymbol{K}(\boldsymbol{X}_{*}, \boldsymbol{X}) \left[ \boldsymbol{K}(\boldsymbol{X}, \boldsymbol{X}) + \sigma_{e}^{2} \boldsymbol{I}_{n} \right]^{-1}  \boldsymbol{K}(\boldsymbol{X}, \boldsymbol{X}_{*}).
\end{align}
%

In general, temporal Gaussian processes take training input $\boldsymbol{x}_{t} = [x_{1,t}, x_{2,t},...,x_{d,t}]^T$ with discrete time index $t=1,2,...,n$, where $x_{1,t}, x_{2,t},...,x_{d,t}$ are specifically the $d$ features observed at time $t$. In this paper, we focus on the one-dimensional (1-D) time series with $d=1$ and $\boldsymbol{x}_{t} = x_{t} = t$. 

\subsection{Classic GP Hyper-parameter Optimization}
\label{subsec:HyperparaOpt}
Next, we introduce the classic ML based GP hyper-parameter estimation. Due to the Gaussian assumption on the noise, the log-likelihood function can be obtained in closed form. 
%
 %
%
The GP hyper-parameters can be tuned equivalently by minimizing the negative log-likelihood function (ignoring the unrelated terms) as 
\begin{equation}
\boldsymbol{\theta}_{ML} \triangleq \arg \min_{\boldsymbol{\theta}} \,  l(\boldsymbol{\theta}) = \boldsymbol{y}^T \boldsymbol{C}^{-1}(\boldsymbol{\theta}) \boldsymbol{y} +  \log \det \left( \boldsymbol{C}(\boldsymbol{\theta}) \right), 
\label{eq:likelihood}
\end{equation}
where $\boldsymbol{C}(\boldsymbol{\theta}) \triangleq \boldsymbol{K}(\boldsymbol{X}, \boldsymbol{X}; \boldsymbol{\theta}_h) + \sigma_{e}^{2} \boldsymbol{I}_{n}$. This optimization problem is mostly solved via gradient based methods, such as LFGS-Newton or conjugate gradient \cite{RW06}, which requires the following partial derivatives for $i=1,2,...,p$ in closed form:
\begin{equation}
\frac{\partial l(\boldsymbol{\theta}) }{\partial \theta_i} \!=\!  tr \!\left( \boldsymbol{C}^{-1}(\boldsymbol{\theta})  \frac{\partial \boldsymbol{C}(\boldsymbol{\theta})}{\partial \theta_i} \right)  -  \boldsymbol{y}^{T} \boldsymbol{C}^{-1}(\boldsymbol{\theta})  \frac{\partial \boldsymbol{C}(\boldsymbol{\theta})}{\partial \theta_i} \boldsymbol{C}^{-1}(\boldsymbol{\theta}) \boldsymbol{y}. \nonumber 
\end{equation}

\subsection{Linear Multiple Kernel}
\label{subsec:MLK}
Linear multiple kernel, as its name suggests, constitutes a linear combination of primitive kernels whose weights are to be optimized. In this paper, we solely focus on the scenario, in which the underlying kernel function $k(t, t')$ is \textbf{completely unknown} but is approximated as $k(t, t') \approx \sum_{i=1}^{m} \alpha_i k_{i}(t, t')$, where the basis sub-kernel functions $k_{i}(t, t')$, $i=1,2,...,m$ are known and the weights $\alpha_i$, $i=1,2,...,m$ are the optimization variables, subject to $\alpha_i \geq 0$. Often, the number of the basis sub-kernels, $m$, is set large to allow for good approximation. For this scenario, no expert knowledge is required. The associated kernel hyper-parameters are $\boldsymbol{\theta}_h = \boldsymbol{\alpha} = [\alpha_1, \alpha_2,...,\alpha_m]^T$. We will introduce two ways of constructing a grid spectral mixture kernel in Section~\ref{sec:GSMKernel} with the aim to let the data decide on the most favorable stationary kernel function approximated by a linear multiple of basis kernels.  

%

\section{Stationary Kernel Design in the Frequency Domain}
\label{sec:GSMKernel}
In subsection~\ref{subsec:SMKernel}, we first briefly review the spectral mixture (SM) kernel proposed originally in \cite{WA13} for approximating any stationary kernel while stressing out the associated difficulties when optimizing with respect to the hyper-parameters. In subsection~\ref{subsec:ProposedGSMKernel}, we introduce two ways of constructing grid spectral mixture (GSM) kernel for building 1-D temporal Gaussian process regression models. Lastly, we show how to combine Welch periodogram with $L_1$ norm regularization for advanced setup of the GSM kernel in subsection~\ref{subsec:GSM-advanced-setup}.
\subsection{SM Kernel \cite{WA13}}
\label{subsec:SMKernel}
The SM kernel undertakes approximation in the frequency domain using the fact that a stationary kernel function and its spectral density are Fourier duals due to the following corollary of Bochner's theorem given in \cite{RW06}. 
%
%
\begin{corollary}
For time series where the free variable is time, i.e., $\boldsymbol{x} = t$, $\tau = t-t'$, $f$ being the normalized frequency (i.e., $f \in [0, 1/2)$) and in the case that the spectral density $S(f)$ exists, the stationary kernel function, $k(\tau)$, and its spectral density of the kernel function, $S(f)$, are Fourier duals of each other as shown below:
\begin{subequations}
\begin{align}
k(\tau) &= \int_{\mathbb{R}^{1}} S(f) \exp \left[ j 2 \pi \tau f  \right] d f ,  \\ 
S(f) &= \int_{\mathbb{R}^{1}} k(\tau) \exp \left[- j 2 \pi \tau f  \right] d \tau .
\end{align}
\end{subequations}
\end{corollary}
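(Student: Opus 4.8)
The plan is to obtain this corollary as a direct specialization of Bochner's theorem (as stated in \cite{RW06}) to the real-valued, scalar time-series setting, augmented by the Fourier inversion formula. First I would invoke Bochner's theorem itself, which asserts that a continuous function $k(\tau)$ is a valid positive semi-definite stationary covariance if and only if it admits the spectral representation $k(\tau) = \int_{\mathbb{R}} \exp(j 2 \pi \tau f) \, d\mu(f)$ for some finite, non-negative measure $\mu$ on the frequency axis. This abstract Fourier--Stieltjes representation is the object from which both displayed identities will be read off, so it is the natural starting point rather than attempting a direct computation.

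Next I would impose the standing hypothesis of the corollary, namely that the spectral measure $\mu$ is absolutely continuous with respect to Lebesgue measure, so that its Radon--Nikodym derivative---the spectral density $S(f)$---exists and satisfies $d\mu(f) = S(f) \, df$. Substituting this into the Bochner representation immediately reproduces the synthesis relation expressing $k(\tau)$ as the forward transform of $S(f)$. The companion analysis relation, recovering $S(f)$ as the inverse transform of $k(\tau)$, then follows by Fourier inversion under the integrability assumption on $k$. To justify restricting attention to the normalized band $f \in [0, 1/2)$, I would use that the process is real-valued: hence $k(\tau)$ is real and even in $\tau$, which upon conjugation and the substitution $\tau \mapsto -\tau$ forces $S(f)$ to be real, non-negative (as already guaranteed by $\mu \geq 0$), and even. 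This evenness is exactly what permits folding the two-sided spectrum onto the half-band, matching the discrete-time normalized-frequency convention adopted here.

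The part I expect to be the genuine obstacle is analytic rather than algebraic: namely, cleanly stating and justifying the existence of the spectral density (equivalently, the absolute continuity of $\mu$) and the validity of the inversion step, since Fourier inversion demands integrability conditions that must be assumed rather than derived from positive definiteness alone. Once these regularity hypotheses are granted---and the corollary explicitly grants them by the clause ``in the case that the spectral density $S(f)$ exists''---the two dual identities reduce to a transcription of the Bochner representation and its inverse, with the symmetry argument above supplying the reduction to $f \in [0, 1/2)$.
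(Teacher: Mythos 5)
Your derivation is correct and follows exactly the route the paper itself takes: the paper states this result as an immediate corollary of Bochner's theorem as given in \cite{RW06}, offering no further proof, and your argument (Bochner representation, absolute continuity of the spectral measure giving $S(f)$, Fourier inversion under the granted integrability, and evenness of $S$ from realness of $k$ to justify the normalized half-band convention) is the standard fleshing-out of that citation. No gaps.
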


The salient SM kernel is designed by approximating the spectral density, $S(f)$, of the underlying stationary kernel by a Gaussian mixture. 
%
%
%
%
%
Taking the inverse Fourier transform of $S(f)$, yields a stationary kernel in the time-domain as
\begin{equation}
k(t, t^{'}; \boldsymbol{\theta}_h) \!=\! k(\tau) \!=\!\! \sum_{q=1}^{Q} \! \alpha_q \exp \! \left[ -2 \pi^{2} \tau^2 \sigma_{q}^2 \right] \cos(2 \pi \tau \mu_q),
\label{eq:SM-Kernel}
\end{equation}
where $\boldsymbol{\theta}_h \triangleq [\alpha_1,...,\alpha_Q, \mu_1,...,\mu_Q, \sigma_{1}^2,..., \sigma_{Q}^2]^T$ denotes the SM kernel hyper-parameters with $Q$ being a fixed number of mixture components, and $\alpha_q$, $\mu_q$, $\sigma_{q}^{2}$ being the weight, mean and variance of the $q$-th mixture component, respectively. The SM kernel is able to approximate any stationary kernel arbitrarily well in $L_1$ norm according to the Wiener's theorem of approximation \cite{Achieser92}.

However, minimizing the negative log-likelihood with respect to $\boldsymbol{\theta}$ in light of Eq.(\ref{eq:likelihood}), it may easily get stuck at a bad local optimum, because the cost function is non-convex in terms of $\boldsymbol{\theta}$ and may not have any favorable structure to facilitate the optimization process.  
%
%

\subsection{Proposed GSM Kernel}
\label{subsec:ProposedGSMKernel}
To address the potential numerical problems with the original SM kernel, we proposed a GSM kernel in \cite{Yin18} with the goal to modify the original SM kernel by fixing the $\mu$ and $\sigma$ parameters to $\textit{a priori}$ selected values in a grid. To be precise, the spectral density is approximated by the GSM kernel as
\begin{equation}
S(f) = \sum_{i=1}^{m} \alpha_i s_{i}(f),
\label{eq:GSM-freqDomain}
\end{equation}
where each $s_{i}(f) = \mathcal{N}(f; \mu_i, \sigma_{i}^2) + \mathcal{N}(f; -\mu_i, \sigma_{i}^2)$ is evaluated at a fixed point in a grid $(\mu_i, \sigma_{i}^{2})$, sampled either uniformly or randomly from a two dimensional space confined in $[\mu_{low}, \mu_{high}]$ and $[\sigma^{2}_{low}, \sigma^{2}_{high}]$. The sampling strategies are shown in Fig.~\ref{fig:figure1} for clarity. 
Taking the inverse Fourier transform of the above spectral density, $S(f)$, yields our first GSM kernel formulation in the time domain as 
\begin{align}
k(t, t^{'}; \boldsymbol{\alpha}) &= \sum_{i=1}^{m} \alpha_{i} k_{i}(t, t^{'}) \nonumber \\
&= \! \sum_{i=1}^{m} \alpha_{i} \exp \! \left[ -2 \pi^{2} \tau^2 \sigma_{i}^2 \right] \! \cos(2 \pi \tau \mu_{i}), 
\label{eq:grid-SM-kernel}
\end{align}
where $\boldsymbol{\alpha}$ is a vector of the kernel hyper-parameters, namely the unknown non-negative weights. Since the grids are generated in the 2-D $(\mu, \sigma)$ space, the resultant kernel is called 2-D GSM kernel. 

The problem with the so designed 2-D GSM kernel lies in the large number of unknown parameters to be optimized. We note that the weights $\alpha_1, \alpha_2,...,\alpha_m$ are all non-negative numbers, but we will not constrain the sum $\sum_{i=1}^{m} \alpha_i$ to be equal to one for approximating a spectral density whose integral is not equal to one, as in \cite{Wilson14}. Therefore, we slightly abuse the term ``Gaussian mixture" mostly used for probability density approximation \cite{Bishop06}. 
\begin{figure}[t]
	\centering
	\includegraphics[width=.45\textwidth]{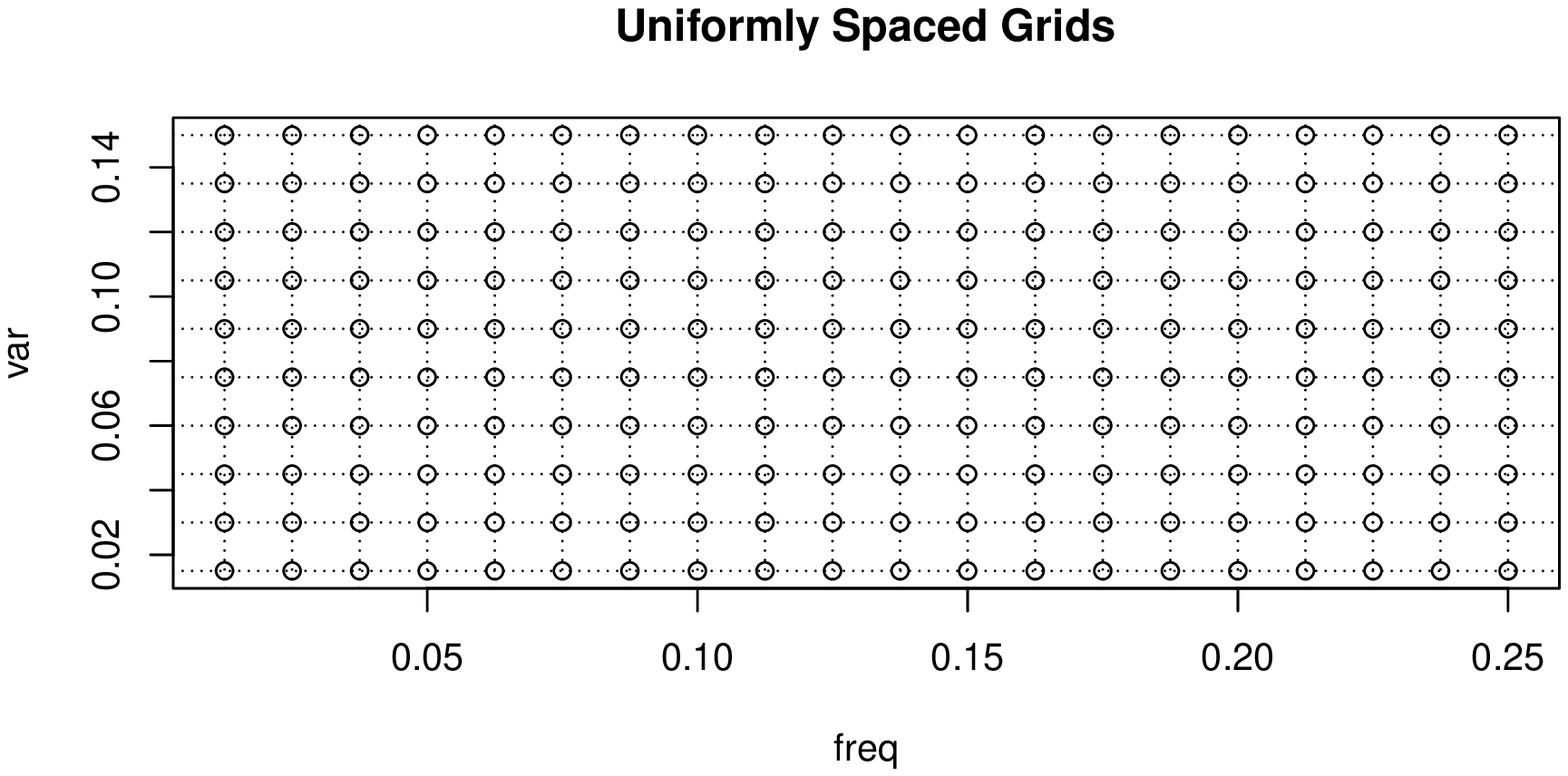}
	\includegraphics[width=.45\textwidth]{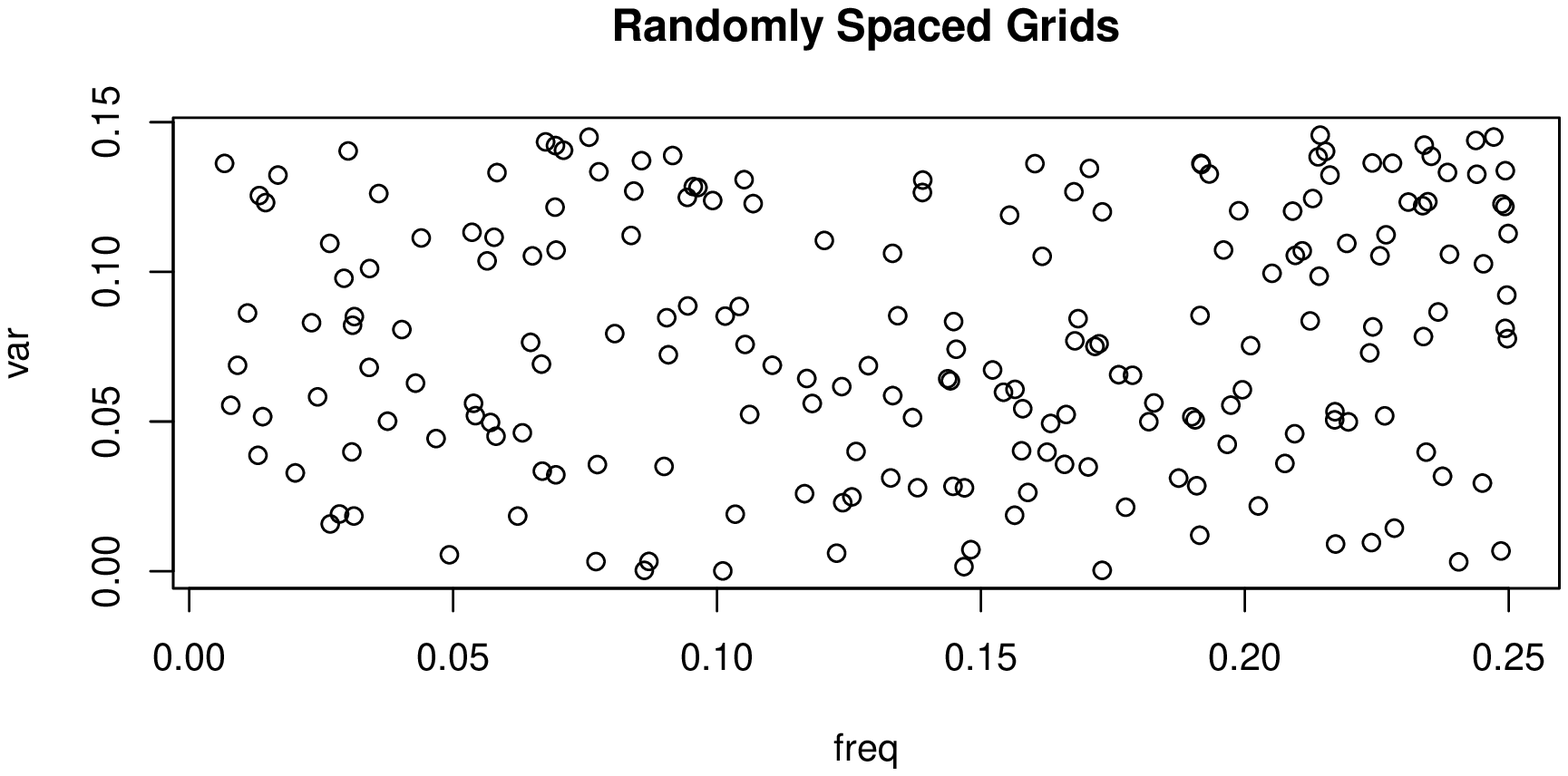}
	\caption{Illustration of the two strategies for generating grids. In this specific example, $\mu_{low}$ is set to be 0,  $\mu_{high} = 0.25$, $\sigma_{low} = 0$ and $\sigma_{high} = 0.15$.}
	\label{fig:figure1}
\end{figure}

In the following, we derive a modified 1-D GSM kernel by fixing the variance parameters, $\sigma_i$, in Eq.(\ref{eq:grid-SM-kernel}) to a small fixed value, $\sigma$, so as to reduce the high model complexity. The resultant GSM kernel boils down to
\begin{align}
k(t, t^{'}; \boldsymbol{\alpha}) = \! \sum_{i=1}^{m} \! \alpha_{i} \exp ( -2 \pi^{2} \tau^2 \sigma^2 ) \! \cos(2 \pi \tau \mu_{i}).
\label{eq:grid-SM-kernel-fixed-sigma}
\end{align}
To differentiate with the 2-D GSM kernel, the kernel given in Eq.(\ref{eq:grid-SM-kernel-fixed-sigma}) is called 1-D GSM kernel, because the grids are generated in the 1-D $\mu$-space, given a fixed $\sigma$. With this 1-D GSM kernel, the underlying spectral density, $S(f)$, is approximated by a linear weighted sum of Gaussian basis functions with varying shifts, $\mu_i$, while fixed bandwidth, $\sigma$. In addition to the kernel design, we also demonstrate some useful properties of the proposed GSM kernels. 
%
%


\begin{theorem}
Some properties of the GSM kernel in Eq.~(\ref{eq:grid-SM-kernel-fixed-sigma}) are given as follows:
\begin{enumerate}
\item It is a valid kernel.
\item It is smooth with derivatives of all orders.
\item Each one of the sub-kernel functions, $k_{i}(\tau)$, is square integrable for any $i=1,2,...,m$. 
\item For big data set with size $n \gg \frac{4}{\pi \sigma}$, the sub-kernel matrix is sparse and close to a band matrix with equal lower and upper bandwidths (irrespective of $\mu_i$), which enables more efficient utilization of computer memory, e.g., in MATLAB \cite{Gilbert92}.
\item For a given data set with $n$ samples, when the variance parameter, $\sigma$, is chosen sufficiently small, then for any frequency parameter $\mu_{i} \in [0, 1/2)$, the corresponding sub-kernel matrix has low rank, $\textrm{rank}(\boldsymbol{K}_i) \ll n$. 
\end{enumerate}
%
\end{theorem}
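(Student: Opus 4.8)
The plan is to dispatch the five claims in increasing order of difficulty, the first three being essentially immediate once we isolate the single sub-kernel
\[
k_i(\tau)=\exp\!\big(-2\pi^2\tau^2\sigma^2\big)\cos(2\pi\tau\mu_i),
\]
and the last two requiring a more careful analysis of the associated matrix $\boldsymbol{K}_i$. For claim (1) I would argue validity directly from the spectral picture: by the Bochner corollary stated above, a stationary function is a valid (positive semi-definite) kernel precisely when its spectral density is non-negative. Since the GSM spectral density in Eq.~(\ref{eq:GSM-freqDomain}) is a non-negative combination (the weights satisfy $\alpha_i\ge 0$) of the manifestly non-negative Gaussian bumps $s_i(f)$, validity follows; equivalently, each $k_i$ is a product of a squared-exponential kernel and a cosine kernel, both valid, and a conic combination of valid kernels is valid. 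Claim (2) is immediate because each $k_i$ is a product of the smooth functions $\exp(-2\pi^2\tau^2\sigma^2)$ and $\cos(2\pi\tau\mu_i)$, hence $C^\infty$, and a finite linear combination preserves smoothness. For claim (3) I would dominate $|k_i(\tau)|^2\le \exp(-4\pi^2\tau^2\sigma^2)$ and integrate the resulting Gaussian over $\mathbb{R}$, which is finite (and closed-form), so $k_i\in L^2(\mathbb{R})$ for every $i$.

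Claim (4) rests on the observation that the magnitude of an entry $[\boldsymbol{K}_i]_{t,t'}$ is governed entirely by the Gaussian envelope $\exp(-2\pi^2\tau^2\sigma^2)$ with $\tau=t-t'$, which does not depend on $\mu_i$. Fixing a negligibility threshold $\epsilon$, I would solve $\exp(-2\pi^2 b^2\sigma^2)\le\epsilon$ for the critical lag $b=O\!\big(1/(\pi\sigma)\big)$, so that all entries with $|t-t'|>b$ fall below $\epsilon$. This yields a symmetric band structure with identical upper and lower (numerical) bandwidths $b$, independent of $\mu_i$; when $n\gg 4/(\pi\sigma)\approx 2b$ the band occupies a vanishing fraction of the $n^2$ entries, giving the claimed sparsity and the memory advantage.

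Claim (5) is the main obstacle, and I would develop it by combining two structural facts. First, the product-to-sum identity
\[
\cos\!\big(2\pi(t-t')\mu_i\big)=\cos(2\pi t\mu_i)\cos(2\pi t'\mu_i)+\sin(2\pi t\mu_i)\sin(2\pi t'\mu_i)
\]
shows that the cosine factor, on its own, forms an \emph{exactly} rank-two matrix $C$. Second, writing the envelope as $\exp(-2\pi^2(t-t')^2\sigma^2)=d_t\,d_{t'}\exp(4\pi^2\sigma^2 t t')$ with $d_t=\exp(-2\pi^2 t^2\sigma^2)$ and expanding the cross term as $\sum_{k\ge 0}\frac{(4\pi^2\sigma^2)^k}{k!}(tt')^k$ exhibits the squared-exponential matrix $G$ as a sum of rank-one outer products of the vectors $d_t\,t^k$. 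Since $\boldsymbol{K}_i=G\circ C$, truncating the series after $R+1$ terms yields an approximation $G^{(R)}\circ C$ of rank at most $2(R+1)$, and I would bound the entrywise error by the tail $\sum_{k>R}\frac{(4\pi^2\sigma^2 n^2)^k}{k!}$ using $tt'\le n^2$ and $d_t\le 1$. This tail is controlled once $R\gtrsim 4\pi^2\sigma^2 n^2$, so a Weyl-type singular-value perturbation bound (the error satisfies $\|\boldsymbol{K}_i-G^{(R)}\circ C\|_{op}\le n\cdot\mathrm{tail}$) forces all but $O(\sigma^2 n^2)$ singular values of $\boldsymbol{K}_i$ to be negligible; hence $\textrm{rank}(\boldsymbol{K}_i)\ll n$ whenever $\sigma$ is small enough that $\sigma^2 n\ll 1$.

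The delicate point I anticipate is precisely the interplay between $\sigma$ and $n$ in claim (5): the rank-two limit $\boldsymbol{K}_i\to C$ holds only as $\sigma\to0$, and the envelope's deviation from one grows with the data span, so ``sufficiently small $\sigma$'' must be read relative to $n$. Quantifying the admissible regime through the series tail, rather than through a naive entrywise perturbation whose Frobenius norm scales unfavourably with $n$, is what makes the bound usable; it also cleanly separates claims (4) and (5) into the two regimes $\sigma n\gg1$ (banded and sparse) and $\sigma\sqrt{n}\ll1$ (numerically low rank).
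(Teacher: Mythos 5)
Your proposal is correct, and for the crucial low-rank claim it shares the paper's skeleton --- write $\boldsymbol{K}_i$ as the Hadamard product of a cosine matrix and a squared-exponential matrix, show the former has exact rank two via the product-to-sum identity, show the latter is (numerically) low rank for small $\sigma$, and multiply the ranks --- but the two halves are executed differently. For validity, the paper works entirely through the closure operations (writing the SE factor as $f(t)\exp(4\pi^2\sigma^2 tt')f(t')$ and invoking the rules for products, exponentials and sums of kernels), whereas your primary argument is the Bochner route via non-negativity of the spectral density; both are sound, and yours is arguably the more natural one given that the kernel was constructed in the frequency domain in the first place. For the rank of the cosine matrix, the paper establishes a three-term column recurrence $\cos(jx)=-\cos((j+2)x)+\tfrac{\sin 2x}{\sin x}\cos((j+1)x)$ and argues that every column past the second is dependent on the first two; your explicit rank-two factorization $C=\boldsymbol{u}\boldsymbol{u}^T+\boldsymbol{v}\boldsymbol{v}^T$ is more direct and gives the same conclusion. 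The real divergence is in the SE factor: the paper Taylor-expands $\exp(a\tau^2)$ in $\tau$ to degree $2r$ and deduces that each column is approximately reproducible from $2r+1$ neighbouring columns, with the admissible $\sigma$ read off from a ratio condition on consecutive Taylor terms; you instead use the separable factorization $d_t d_{t'}\exp(4\pi^2\sigma^2 tt')$ and expand the cross term into a sum of rank-one outer products, which yields an explicit rank-$(R{+}1)$ approximant together with a quantitative tail bound and a singular-value perturbation argument. Your version buys a cleaner, checkable error estimate and makes explicit that the statement is really about \emph{numerical} rank (both proofs, the paper's included, only control approximate rank, since $\boldsymbol{K}_i$ is generically nonsingular in exact arithmetic); both routes land in the same regime $\sigma\lesssim 1/\sqrt{n}$. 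Properties (3) and (4) are relegated to the paper's supplement, but your domination and thresholding arguments are the standard ones and are correct.
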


\begin{proof}
Sketch of the proofs are summarized below:
\begin{itemize}
\item Proof of property (1) is given in the Appendix A. 
\item Verification of property (2) is straightforward.
\item Reasoning of properties (3) and (4) is given in the supplement.
\item Proof of property (5) is given in the Appendix B. 
\end{itemize}
\end{proof}
It is easy to see that the above properties hold for any sub-kernel of the 2-D GSM kernel in Eq.(\ref{eq:grid-SM-kernel}) as well. 

\begin{remark}
Our way of constructing the 1-D GSM kernel is related, in some sense, to the non-parametric kernel density estimator using an optimal kernel width \cite{Silverman86}. The difference, however, lies in the distribution of the ``frequency variables". For the 1-D GSM kernel, the frequency variables $\mu_i$ are selected either uniformly or randomly from the selected region; while in the non-parametric kernel density estimation, the ``frequency variables" are essentially generated from the underlying density function to be reconstructed. 
\end{remark}

\subsection{Advanced Setup of the 1-D GSM Kernel}
\label{subsec:GSM-advanced-setup}
In the previous subsection, we have seen a modified GSM kernel function as given in Eq.~(\ref{eq:grid-SM-kernel-fixed-sigma}). In order to make it attractive from a practical point of view, as it will be confirmed in Section~\ref{sec:results}, one simply needs to 1) choose a moderate number of modes, $m$; 2) set a small number $\sigma$ common to all grids, and 3) sample $\mu_i$, $i=1,2,...,m$, either uniformly or randomly from $[0, 1/2)$. Naturally, one may ask for more advanced setup of the GSM kernel with reduced model complexity, promising sampling areas, and better initial guess of the unknown weights. 

For the purpose of obtaining an advanced setup, we could exploit the observations $y(t), t=1,2,...,n$, which are assumed to comprise a noisy realization of the underlying stationary random process $f(t)$, so that to build an estimate of the true spectral density. A candidate is to use the Welch periodogram \cite{Oppenheim1975} as an estimator of the underlying spectral density, $S(f)$. To construct a Welch periodogram, we need to partition $y(t), t=1,2,...,n$, into $L$ overlapped segments, $y_{l}(t), l=1,2,...,L$, each with only $D$ data points. For each segment, a local periodogram is then computed as 
\begin{equation}
I_{W,l}^{D}(f) = \frac{1}{DA} \left|  \sum_{t=1}^{D} w(t) y_{l}(t) e^{-j 2 \pi f t}  \right|^2, 
\end{equation}
where $w(t)$ is a deterministic window function, e.g., the Bartlett window, and $A = \frac{1}{D} \sum_{t=1}^{D} |w(t)|^2$ is a normalization factor. The final Welch periodogram, $\hat{S}_{W}(f)$, is given as the average of the $L$ local periodograms.
%
%
For a big data set with $n$, $L$, and $D$ all being large, the Welch periodogram is an asymptotically consistent estimator of the underlying power spectral density. Hence, by inspecting the Welch periodogram, we may obtain good prior knowledge about the model complexity, $m$, as well as the salient areas for the sampling points in the grid. 
%
%

Next, the previously obtained periodogram will be used to compute a potentially good initial guess of the weights, $\boldsymbol{\alpha}$. 
To this end, we solve
\begin{equation}
\min_{\boldsymbol{\alpha}} || \boldsymbol{s}_W - \boldsymbol{\Psi} \boldsymbol{\alpha} ||_{2}^{2} + \lambda || \boldsymbol{\alpha} ||_{1},
\label{eq:L1-norm-LS}
\end{equation}
where $ \boldsymbol{s}_W = [\hat{S}_{W}(\mu_1), \hat{S}_{W}(\mu_2),...,\hat{S}_{W}(\mu_m)]^T $ contains the periodogram values evaluated at the discrete frequencies, $\mu_1$, $\mu_2$,...,$\mu_m$; matrix $\boldsymbol{\Psi}$ is of size $m \times m$, whose $i$-th row is the transpose of $\boldsymbol{s}(\mu_i) = [s_{1}(\mu_i), s_{2}(\mu_i),...,s_{m}(\mu_i)]^T$ with each entry computed according to the definition of the Gaussian mixture component introduced in Eq.~(\ref{eq:GSM-freqDomain}). A practical and efficient method for solving a large scale $L1$-regularized least-squares problem in Eq.(\ref{eq:L1-norm-LS}) can be found in \cite{Kim07}. 

Here, we must acknowledge that using empirical periodogram for additional information concerning the underlying spectral density was already mentioned in \cite{WA13}. However, in our current context, it is used as a potentially better initial guess of $\boldsymbol{\alpha}$, given the knowledge that our hyper-parameter estimate will be sparse. 

\section{Memory Efficient Kernel Matrix Approximations}
\label{sec:GSM-Approx}
When the proposed GSM kernel contains a large number of sub-kernels, i.e., $m$ is large, and moreover the data size $n$ is large, unaffordable memory is needed to store the $m$ huge sub-kernel matrices during the hyper-parameter optimization process, as will be introduced in Section~\ref{sec:hyperpara-opt-GSMKernel}. Often, a factor $\boldsymbol{L}_i$, satisfying $\boldsymbol{K}_i = \boldsymbol{L}_i \boldsymbol{L}_i^{T}$, is stored instead of the sub-kernel matrix $\boldsymbol{K}_i$ with much reduced memory, especially when $\boldsymbol{K}_i $ has low rank. In this section, we discuss two kernel matrix approximations, namely the Nystr\"{o}m approximation in subsection~\ref{subsec:Nystrom} and the random Fourier feature approximation in subsection~\ref{subsec:RFF}, that can be adopted to provide good approximations of $\boldsymbol{L}_i$ with relatively low computational complexity and reduced memory. 
\subsection{Nystr\"{o}m Approximation \cite{WS01}}
\label{subsec:Nystrom}
First, we introduce the Nystr\"{o}m approximation \cite{WS01} of the kernel matrix $\boldsymbol{K}_{i}$, $i=1,2,...,m$. In the sequel, we omit the subscript $i$ for brevity because the same procedure can be applied to any $\boldsymbol{K}_{i}$. Detailed steps are as follows:

\textbf{Step 1}: Sample a subset of $p$ ($\leq n$) training inputs to form $\tilde{\boldsymbol{X}}$ from the complete set of training inputs $\boldsymbol{X}$. 

\textbf{Step 2}: Compute $\boldsymbol{K}^{(p)}$ with the sub-sampled training inputs $\tilde{\boldsymbol{X}}$. Herein, the superscript $(p)$ indicates $\boldsymbol{K}^{(p)}$ is of size $p \times p$. 

\textbf{Step 3}: Perform eigendecomposition of the smaller kernel matrix $\boldsymbol{K}^{(p)}$ as
\begin{equation}
\boldsymbol{K}^{(p)} = \sum_{l=1}^{\tilde{p}} \lambda_{l}^{(p)} \boldsymbol{u}_{l}^{(p)} \left( \boldsymbol{u}_{l}^{(p)} \right)^{T},
\end{equation}
where $\tilde{p}$ denotes the effective number of eigenvalues that are distinctly larger than zero and obviously $\tilde{p} \leq p$. We further define $\boldsymbol{\Sigma}^{(p)} \triangleq diag(\lambda_{1}^{(p)}, \lambda_{2}^{(p)},...,\lambda_{\tilde{p}}^{(p)})$ and $\boldsymbol{U}^{(p)} \triangleq \left[ \boldsymbol{u}_{1}^{(p)}, \boldsymbol{u}_{2}^{(p)},...,\boldsymbol{u}_{\tilde{p}}^{(p)}\right]$ for later use. 

\textbf{Step 4}: Apply Nystr\"{o}m approximation to the eigenvalues and eigenvectors obtained in the previous step as follows:
\begin{align}
\tilde{\lambda}_{l} &= \frac{n}{p} \lambda_{l}^{(p)}, \quad l=1,2,...,\tilde{p},  \\
\tilde{\boldsymbol{u}}_{l} &= \sqrt{\frac{p}{n}} \frac{1}{\lambda_{l}^{(p)}} \boldsymbol{K}(\boldsymbol{X}, \tilde{\boldsymbol{X}}) \boldsymbol{u}_{l}^{(p)}, \quad l=1,2,...,\tilde{p},
\end{align}
where $\tilde{\lambda}_{l}$ and $\tilde{\boldsymbol{u}}_{l}$ are respectively the approximated $l$-th eigenvalue and eigenvector of the original $n \times n$ kernel matrix $\boldsymbol{K}$, and $\boldsymbol{K}(\boldsymbol{X}, \tilde{\boldsymbol{X}})$ is an $n \times p$ matrix of correlations between the training inputs $\boldsymbol{X}$ and sub-sampled training inputs $\tilde{\boldsymbol{X}}$. 

\textbf{Step 5}: Finally, we obtain a low-rank (of rank $\tilde{p}$) approximation of the original kernel matrix $\boldsymbol{K}$ as follows:
\begin{equation}
\boldsymbol{K} \approx \tilde{\boldsymbol{K}} = \sum_{l=1}^{\tilde{p}} \tilde{\lambda}_{l} \tilde{\boldsymbol{u}}_{l} \tilde{\boldsymbol{u}}_{l}^{T} = \tilde{\boldsymbol{U}} \tilde{\boldsymbol{\Sigma}} \tilde{\boldsymbol{U}}^{T},
\label{eq:Ny-approx-Kernel-Matrix}
\end{equation}
where $\tilde{\boldsymbol{U}} \triangleq \left[ \tilde{\boldsymbol{u}}_{1}, \tilde{\boldsymbol{u}}_{2},...,\tilde{\boldsymbol{u}}_{\tilde{p}} \right]$ is the matrix of the $\tilde{p}$ eigenvectors and $\tilde{\boldsymbol{\Sigma}}\triangleq diag(\tilde{\lambda}_{1}, \tilde{\lambda}_{2},...,\tilde{\lambda}_{\tilde{p}}) $ is a diagonal matrix of the $\tilde{p}$ eigenvalues. Lastly, we approximate the factor $\boldsymbol{L}$ by $\tilde{\boldsymbol{L}} \triangleq \tilde{\boldsymbol{U}} \tilde{\boldsymbol{\Sigma}}^{1/2}$, which is of smaller size $n \times \tilde{p}$.

It is easy to verify that the memory usage for storing $\tilde{\boldsymbol{L}}$ is reduced to $\tilde{p}/n \times 100\%$ of the original usage for storing $\boldsymbol{L}$. Moreover, the computational complexity for performing eigendecomposition is also reduced from $\mathcal{O}(n^3)$ to $\mathcal{O}(p^3)$. 

%

\subsection{Random Fourier Feature Approximation \cite{Rahimi07}}
\label{subsec:RFF}
Next, we introduce the random Fourier feature approximation. When the spectral density function $S(f)$ is an even function of $f$, we can easily derive the corresponding stationary kernel function as
\begin{equation}
k(t, t') = \mathbb{E}_{S(f)} \left[ \cos \left( 2 \pi f t - 2 \pi f t'  \right) \right] . 
\end{equation}
By replacing the above integration with Monte-Carlo integration, $k(t, t')$ is approximated as follows:
\begin{align}
k(t, t') &\approx \frac{1}{R} \sum_{r=1}^{R} \cos \left( 2 \pi f_r t - 2 \pi f_r t'  \right),
\end{align}
where $f_r$, $r=1,2,...,R$ are sampled from the spectral density function $S(f)$. Let $\omega_r \triangleq 2 \pi f_r$, $r=1,2,...,R$ and define
\begin{equation}
\boldsymbol{\phi}_{\omega}(t) \!\! \triangleq \!\! \frac{1}{\sqrt{R}} [\cos(\omega_1 t), \sin(\omega_1 t),...,\cos(\omega_{R} t), \sin(\omega_{R} t)]^T
\end{equation}
we have $k(t, t') \approx \boldsymbol{\phi}^{T}_{\omega}(t)  \boldsymbol{\phi}_{\omega}(t')$.  
%

The GSM kernel proposed in the above subsection is of form $k(t, t^{'}; \boldsymbol{\alpha}) = \sum_{i=1}^{m} \alpha_{i} k_{i}(t, t^{'})$
and it can be approximated as 
\begin{equation}
k(t, t' ; \boldsymbol{\alpha}) \approx \sum_{i=1}^{m} \alpha_i \boldsymbol{\phi}^{T}_{i, \omega}(t)  \boldsymbol{\phi}_{i, \omega}(t'),
\end{equation} 
by applying random Fourier representation to each sub-kernel function, i.e.,
\begin{equation}
k_{i}(t, t') \!=\!  \exp \! \left[\! -2 \pi^{2} \tau^2 \sigma_{i}^2 \right] \! \cos(2 \pi \tau \mu_{i}) \! \approx \! \boldsymbol{\phi}^{T}_{i, \omega}(t)  \boldsymbol{\phi}_{i, \omega}(t'),
\end{equation}
where the random Fourier features for the $i$-th sub-kernel $k_{i}(t, t')$ are randomly sampled from $s_{i}(f)$ (also a valid distribution function). 

With the aid of the random Fourier feature representation of the sub-kernel, the overall GSM kernel matrix can be represented as $\boldsymbol{K} = \sum_{i=1}^{m} \alpha_i \boldsymbol{K}_i \approx \tilde{\boldsymbol{K}} = \sum_{i=1}^{m} \alpha_i \tilde{\boldsymbol{L}}_i \tilde{\boldsymbol{L}}_{i}^{T}$, where $\tilde{\boldsymbol{L}}_i = [\boldsymbol{\phi}_{i,\omega}(t_1), \boldsymbol{\phi}_{i,\omega}(t_2),....,\boldsymbol{\phi}_{i,\omega}(t_n)]^{T}$ is an $n \times 2R$ matrix and $\boldsymbol{L}_i \approx \tilde{\boldsymbol{L}}_i$. The memory usage for storing $\tilde{\boldsymbol{L}}$ is reduced to $2R/n \times 100\%$ of the original usage for storing $\boldsymbol{L}$. The computational complexity is mainly due to a batch of samplings from a Gaussian distribution, which remains low. Random Fourier feature is widely used for resource limited kernel approximations, e.g., for fast online learning \cite{Bouboulis18} and others \cite{Theodoridis15}. Alternative to the random Fourier features, one may also use the Fastfood features that can be computed more efficiently \cite{Le13}. 

\subsection{RAE versus Storage}
\label{subsec:error-vs-storage}
To compare the above two methods in terms of approximation accuracy and storage, we adopt the widely used metric relative approximation error (RAE) as the performance metric, which is given by $\Vert{\boldsymbol{K}-\tilde{\boldsymbol{K}}}\Vert_F / \Vert{\boldsymbol{K}}\Vert_F$, where $\boldsymbol{K}$ is the exact kernel matrix and $\tilde{\boldsymbol{K}} = \tilde{\boldsymbol{L}} \tilde{\boldsymbol{L}}^T$ is its approximation. Due to space limitation, the results are shown in the supplement. 

The general conclusions are as follows:
\begin{itemize}
\item For small data set, Nystr\"{o}m approximation may require less memory than the random Fourier feature approximation in order to achieve a similar small value of RAE, say less than $1\%$. 
\item For medium or large data set, random Fourier feature approximation may require less memory than the Nystr\"{o}m approximation in order to achieve a similar small value of RAE. This is because the number of random features needed for constructing a good approximation can be kept to several hundreds, not sensitive to sample size of the selected data set; while the number of data points needed by the Nystr\"{o}m approximation increases with the sample size, in general. When the kernel matrices have low rank, less samples is needed by the Nystr\"{o}m approximation. 
\end{itemize}

\section{ML Based GP Hyper-parameter Optimization}
\label{sec:hyperpara-opt-GSMKernel}
For linear multiple kernel, including the proposed GSM kernel, the associated maximum-likelihood based GP hyper-parameter optimization problem can be cast as
\begin{align}
\boldsymbol{\theta}_{ML} \!=\! \arg \min_{\boldsymbol{\alpha}, \sigma_{e}^2} \left\lbrace \boldsymbol{y}^T \boldsymbol{C}^{-1}(\boldsymbol{\alpha}, \sigma_{e}^2) \boldsymbol{y} \!+\!  \log \det \! \left( \boldsymbol{C}(\boldsymbol{\alpha}, \sigma_{e}^2) \right) \right\rbrace,
\label{eq:MLnew}
\end{align}
subject to $\boldsymbol{\alpha} \geq \boldsymbol{0}$ and $\sigma_{e}^2 \geq 0$. Here, $\boldsymbol{\theta}= [\boldsymbol{\alpha}^T, \sigma_{e}^2]^{T}$ and $\boldsymbol{C}(\boldsymbol{\alpha}, \sigma_{e}^2) \triangleq  \sum_{i=1}^{m} \alpha_{i} \boldsymbol{K}_{i} +  \sigma_{e}^2 \boldsymbol{I}_{n}$, where $\boldsymbol{K}_{i}$ is the $i$-th sub-kernel matrix evaluated at the grid point $(\mu_i, \sigma_i)$ for the 2-D GSM kernel or $(\mu_i, \sigma)$ for the 1-D GSM kernel.
%
%
%
The cost function in Eq.(\ref{eq:MLnew}) is a difference of two convex functions with respect to $\boldsymbol{\alpha}$ and $\sigma_{e}^2$, therefore the optimization problem belongs to the well known difference-of-convex program (DCP) \cite{BV04,CAL14,Lipp16}. Here, we want to stress, once more, that the primary idea behind the newly proposed GSM kernel is to maintain good approximation capability with a structure that leads to a well known optimization problem with respect to the GP hyper-parameters. However, ML method for the previously suggested SM kernel leads to a general non-convex hyper-parameter optimization task. The additional structure in Eq.(\ref{eq:MLnew}) can facilitate the optimization task in terms of convergence speed and avoidance of bad local minimum, as will be seen in our experiments. 
%

In the following, we derive two numerical methods for optimizing the GP hyper-parameters. In subsection~\ref{subsec:DCP}, we derive a sequential majorization-minimization (MM) method. In subsection~\ref{subsec:ADMM-solution}, we derive a nonlinearly constrained alternating direction method of multipliers (ADMM) \cite{Boyd11}. No matter which method is adopted, the solution can be proven to be sparse according to the theorem provided along with some other properties in subsection~\ref{subsec:analysisMIN}. 
\subsection{Sequential MM Method \cite{CAL14}}
\label{subsec:DCP}
%
%
%
The main idea of the MM method is to solve $\min_{\boldsymbol{\theta} \in \Theta}\, l(\boldsymbol{\theta})$ with $\Theta \subseteq \mathbb{R}^{m+1}$ through an iterative scheme, where at each iteration a so-called majorization function $\bar{l}(\boldsymbol{\theta},\boldsymbol{\theta}^{k})$ of $l(\boldsymbol{\theta})$ at $\boldsymbol{\theta}^{k} \in \Theta$ is minimized, i.e.,
\begin{equation}
\boldsymbol{\theta}^{k+1} = \arg \min_{\boldsymbol{\theta} \in \Theta} \bar{l}(\boldsymbol{\theta},\boldsymbol{\theta}^{k}),
\label{eq:MM}
\end{equation}
where $\bar{l}: \Theta \times \Theta \to \mathbb{R}$ satisfies $\bar{l}(\boldsymbol{x},\boldsymbol{x}) = l(\boldsymbol{x})$ for $\boldsymbol{x} \in \Theta$ and $l(\boldsymbol{x}) \le \bar{l}(\boldsymbol{x},\boldsymbol{z})$ for $\boldsymbol{x}, \boldsymbol{z} \in \Theta$. For this particular DCP problem in Eq.(\ref{eq:MLnew}), $l(\boldsymbol{\theta}) = g(\boldsymbol{\theta}) - h(\boldsymbol{\theta})$, where $g(\boldsymbol{\theta}) = \boldsymbol{y}^T \boldsymbol{C}(\boldsymbol{\theta})^{-1} \boldsymbol{y}$, $h(\boldsymbol{\theta}) = -\log \det \boldsymbol{C}(\boldsymbol{\theta})$ and $\boldsymbol{C}(\boldsymbol{\theta}) = \sum_{i=1}^{m} \alpha_{i} \boldsymbol{L}_{i} \boldsymbol{L}_{i}^{T} +  \sigma_{e}^2 \boldsymbol{I}_{n}$; $g(\boldsymbol{\theta}), h(\boldsymbol{\theta}): \Theta \to \mathbb{R}$ are convex and differentiable functions with $\Theta$ being a convex set in $\mathbb{R}^{m+1}$. 
Here, we use the so-called linear majorization, i.e., we make the convex function $h(\boldsymbol{\theta})$ affine by performing the first-order Taylor expansion and obtain $\bar{l}(\boldsymbol{\theta},\boldsymbol{\theta}^{k}) = g(\boldsymbol{\theta}) - h(\boldsymbol{\theta}^{k}) - \nabla_{\boldsymbol{\theta}}^{T} h(\boldsymbol{\theta}^{k}) (\boldsymbol{\theta} - \boldsymbol{\theta}^{k})$.
Hence, at each iteration, minimizing the cost function in Eq.(\ref{eq:MM}) becomes a convex optimization problem. The MM method is guaranteed to converge to a stationary point when some regularization conditions are satisfied \cite{CAL14}. But it is noticed that solving this problem directly with \textit{CVX}, a package for specifying and solving convex programs \cite{GB13}, is very computationally demanding. 

Since $g(\boldsymbol{\theta})$ is a matrix fractional function, each iteration in the MM method actually solves a convex matrix fractional minimization problem. 
Due to the fact that $ \sum_{i=1}^{m} \alpha_{i} \boldsymbol{L}_{i} \boldsymbol{L}_{i}^{T} + \sigma_{e}^2 \boldsymbol{I}_{n} $ is a sum of positive semi-definite terms, the SDP problem can be further cast as a conic quadratic optimization problem with $m+1$ rotated quadratic cone constraints, i.e.,
\begin{align}
& \min_{\boldsymbol{z}, \boldsymbol{\theta}, \boldsymbol{v}, \boldsymbol{w}} \quad 2(\boldsymbol{1}^{T} \boldsymbol{z}) - \nabla_{\boldsymbol{\theta}}^{T} h(\boldsymbol{\theta}^{k}) \boldsymbol{\theta} \nonumber \\
s.t. & \quad \Vert \boldsymbol{w}_{i} \Vert^{2}_{2} \le 2 \theta_{i} z_{i}, \quad i = 1,2,\ldots,m \nonumber \\
& \quad \Vert \boldsymbol{v} \Vert^{2}_{2} \le 2 z^\prime \nonumber \\
& \quad \boldsymbol{y} = \sum_{i=1}^{m} \boldsymbol{L}_{i} \boldsymbol{w}_{i} + \sigma_{e} \boldsymbol{v}, \quad \boldsymbol{\theta} \ge \boldsymbol{0}, \boldsymbol{z} \ge \boldsymbol{0}
\label{eq:second-order-cone}
\end{align}
where $\boldsymbol{\theta} \in \mathbb{R}^{m+1}, \boldsymbol{z} = [z_{1}, z_{2}, \cdots, z_{m}, z^{\prime}]^{T} \in \mathbb{R}^{m+1}, \boldsymbol{v} \in \mathbb{R}^{n}$, and $\boldsymbol{w}_{i} \in \mathbb{R}^{n_{i}}$ for $i = 1,2,\ldots,m$.
The conic quadratic optimization problem here is equivalent to a second-order cone program that can be solved efficiently using the commercial solver MOSEK \cite{CAL14}. 
\begin{algorithm}[t]
	\caption{Sequential MM Method}
	\KwIn{$\boldsymbol{y}$ and $\boldsymbol{L}_{i}, i = 1,2,\ldots,m$}
	\KwOut{$\boldsymbol{\theta}_{ML}$}
	Initialization: $k=0$, $\boldsymbol{\theta}^{0}$\\
	\While{the convergence condition is not satisfied}{
		Compute $h(\boldsymbol{\theta}^{k})$ and the gradient $\nabla_{\boldsymbol{\theta}} h(\boldsymbol{\theta}^{k})$.
		
		Solve Eq.(\ref{eq:second-order-cone}) for $\boldsymbol{\theta}^{k+1}$.
		
		Set $k = k+1$.
	}
	$\boldsymbol{\theta}_{ML} = \boldsymbol{\theta}^{k}$
\end{algorithm}

\begin{remark}
The computational complexity for solving one iteration of the above second-order cone program scales as $\mathcal{O}(n^2 \cdot max(n, \sum_{i=1}^{m} n_i))$, where $n_i$ stands for the rank of $\boldsymbol{K}_i$. The worst case complexity is $\mathcal{O}(m n^3)$ if all GSM sub-kernel matrices have full rank. Fortunately, as it was reported in \cite{CAL14} the MM method requires only a few iterations to achieve a good local optimum in practice.
\end{remark}
\begin{remark}
The above MM method matches perfectly with the proposed GSM kernel. This is due to the fourth property of the GSM kernel given in Theorem 1, i.e., for a given number of data samples, $n$, and a sufficiently small $\sigma$, the rank of $\boldsymbol{L}_i$ satisfies $n_i \ll n$, making $\sum_{i=1}^{m} n_i$ relatively small. Moreover, the two matrix approximation approaches are also helpful for reducing the complexity. For instance, using the random Fourier feature approximation can reduce the computational complexity to $\mathcal{O}(n^2 \cdot max(n, 2mR))$, where $R$ is the number of random features, specified in Section~\ref{sec:GSM-Approx}. 
\end{remark}

\subsection{Nonlinearly Constrained ADMM} 
\label{subsec:ADMM-solution}
In this subsection, we will propose a nonlinearly constrained ADMM for solving the optimal GP hyper-parameters, $\boldsymbol{\theta}$, from the maximum-likelihood estimation problem in Eq.~(\ref{eq:MLnew}). This new method has good potential to find a better local minimum with smaller negative likelihood value, $l(\boldsymbol{\theta})$, and the prediction MSE as compared to the sequential MM method and the classic gradient descent method. However, this method constrains itself to time series with short data records because its sub-problems involve matrix inversion and matrix multiplications, which scale as $\mathcal{O}(n^3)$ in general.

The idea is as follows. We reformulate the original problem by introducing an $n \times n$ matrix $\boldsymbol{S}$ and solve instead 
\begin{equation}
\arg \min_{\boldsymbol{S}, \boldsymbol{\alpha}} \boldsymbol{y}^{T} \boldsymbol{S} \boldsymbol{y} - \log \det(\boldsymbol{S}),
\end{equation}
subject to $\boldsymbol{S} \left( \sum_{i}^{m} \alpha_{i} \boldsymbol{K}_{i} + \sigma_{e}^2 \boldsymbol{I}_{n} \right) = \boldsymbol{I}_{n}$ and $\boldsymbol{\alpha} \geq \boldsymbol{0}$. Although $\sigma_{e}^2 \geq 0$ can be estimated jointly, we simply assume it is known \textit{a priori} and focus on the kernel hyper-parameters, $\boldsymbol{\alpha}$. This is for ease of notation and narration in the sequel.

The augmented Lagrangian function is then formulated as:
\begin{align}
L_{\rho}\left(\boldsymbol{S},  \boldsymbol{\alpha}, \boldsymbol{\Lambda} \right) &= \boldsymbol{y}^{T} \boldsymbol{S} \boldsymbol{y} - \log \det(\boldsymbol{S}) \nonumber \\ 
&+  \left\langle  \boldsymbol{\Lambda}, \boldsymbol{S} \left( \sum_{i}^{m} \alpha_{i} \boldsymbol{K}_{i} + \sigma_{e}^2 \boldsymbol{I}_{n} \right) - \boldsymbol{I}_{n} \right \rangle  \nonumber \\
&+ \frac{\rho}{2} \left| \left| \boldsymbol{S} \left( \sum_{i}^{m} \alpha_{i} \boldsymbol{K}_{i} + \sigma_{e}^2 \boldsymbol{I}_{n} \right) - \boldsymbol{I}_{n} \right| \right|_{F}^{2},
\label{eq:ADMM-ALF}
\end{align}
where the regularization parameter $\rho>0$ is fixed \textit{a priori}. The ADMM applied to Eq.(\ref{eq:ADMM-ALF}) iteratively decomposes into solving the following sub-problems: 
\begin{align}
\boldsymbol{S}^{k+1} &= \arg \min_{\boldsymbol{S}} L_{\rho} \! \left(\boldsymbol{S},  \boldsymbol{\alpha}^{k}, \boldsymbol{\Lambda}^{k} \right) \label{eq:ADMM-subproblem1} \\
\alpha_{i}^{k+1} &= \arg \min_{\alpha_i}  L_{\rho} \!\left(\boldsymbol{S}^{k+1}, \{ \alpha_i, \boldsymbol{\alpha}^{k,k+1}_{-i} \}, \boldsymbol{\Lambda}^{k} \right), i=1,...,m  \label{eq:ADMM-subproblem2} \\
\boldsymbol{\Lambda}^{k+1} &= \boldsymbol{\Lambda}^{k} \!+\! \rho^{\prime} \! \left[ \! \boldsymbol{S}^{k+1} \! \left( \sum_{i}^{m} \alpha_{i}^{k+1} \boldsymbol{K}_{i} + \sigma_{e}^2 \boldsymbol{I}_{n} \right) \!-\! \boldsymbol{I}_{n} \! \right],  \label{eq:ADMM-subproblem3}
\end{align}
where $\boldsymbol{\alpha}^{k,k+1}_{-i} \triangleq [\alpha_1^{k+1}, \alpha_2^{k+1},...,\alpha_{i-1}^{k+1}, \alpha_{i+1}^{k},...,\alpha_{m}^{k}]^{T}$ in Eq.(\ref{eq:ADMM-subproblem2}).
%
%

\begin{remark}
It is not difficult to verify that the subproblems of the proposed ADMM in Eq.~(\ref{eq:ADMM-subproblem1}) and Eq.~(\ref{eq:ADMM-subproblem2}) are both convex in terms of the corresponding optimization variables. 
\end{remark}

\begin{remark}
Different from the conventional ADMM, in the shown nonlinearly constrained ADMM, $\rho^{\prime}$ used for the dual variable update in Eq.(\ref{eq:ADMM-subproblem3}) is chosen to be smaller than $\rho$ used for the primal update in Eq.(\ref{eq:ADMM-ALF}). This novel configuration was first applied in the flexible proximal ADMM for consensus problems in \cite{Hong16}, where the authors set $\rho = \rho^{\prime} + L$ with $L$ being the Lipschitz constant of the the gradient of the objective function and harvested improved convergence performance. 
\end{remark}
In order to avoid the high computational cost for solving $\boldsymbol{S}^{k+1}$ precisely from Eq.(\ref{eq:ADMM-subproblem1}), which involves solving a quadratic matrix equation, we resort to the steepest descent method, which is computationally cheaper. We numerically update 
\begin{align}
\boldsymbol{S}^{k+1,\eta+1} = \boldsymbol{S}^{k+1,\eta} + \mu^{\eta} d^{\eta}, \quad \eta = 0,1,...,It_{\boldsymbol{S}}-1, 
\end{align}
where $d^{\eta} = -\frac{\nabla_{\boldsymbol{S}}L_{\rho}}{\left| \left| \nabla_{\boldsymbol{S}}L_{\rho} \right| \right|_{F}}$, $S^{k+1,0} := S^{k}$, and $It_{\boldsymbol{S}}$ is a fixed number of inner iterations. The gradient of $L_{\rho}\left(\boldsymbol{S}, \boldsymbol{\alpha}^{k}, \boldsymbol{\Lambda}^{k} \right)$ with respect to $\boldsymbol{S}$, short as $\nabla_{\boldsymbol{S}}L_{\rho}$, is equal to 
\begin{align}
\nabla_{\boldsymbol{S}} L_{\rho}\left(\boldsymbol{S}, \boldsymbol{\alpha}^{k}, \boldsymbol{\Lambda}^{k} \right) &=  2 \boldsymbol{y} \boldsymbol{y}^{T} \!-\! (\boldsymbol{y} \boldsymbol{y}^{T}) \circ \boldsymbol{I}_n \!-\! 2 \boldsymbol{S}^{-1} \!+\! \boldsymbol{S}^{-1} \circ \boldsymbol{I}_n \nonumber \\ 
&+ \! \boldsymbol{\Lambda}^{k} \boldsymbol{C}^{k} + (\boldsymbol{\Lambda}^{k} \boldsymbol{C}^{k})^T \!-\! (\boldsymbol{\Lambda}^{k} \boldsymbol{C}^{k}) \circ \boldsymbol{I}_n \nonumber \\
&+ \! \rho ( \boldsymbol{S} \boldsymbol{C}^{k} \boldsymbol{C}^{k} \!+\! \boldsymbol{C}^{k} \boldsymbol{C}^{k} \boldsymbol{S}  \!-\! (\boldsymbol{S} \boldsymbol{C}^{k} \boldsymbol{C}^{k}) \circ \boldsymbol{I}_n)\nonumber \\
&- \! \rho (2\boldsymbol{C}^{k} - \boldsymbol{C}^{k} \circ \boldsymbol{I}_n),
\label{eq:Original_S_gradient}
\end{align}
where both $\boldsymbol{C}^{k} = \sum_{i}^{m} \alpha_{i}^{k} \boldsymbol{K}_{i} + \sigma_{e}^2 \boldsymbol{I}_{n} $ and $\boldsymbol{S}$ are symmetric. The above gradient involves a matrix inverse of current $\boldsymbol{S}^{k}$ which is computationally demanding. For speed up, we replace $(\boldsymbol{S}^{k})^{-1}$ with $\boldsymbol{C}^{k}$ as approximation in Eq.(\ref{eq:Original_S_gradient}) whenever possible.
%
%
%
%
To be precise, at each iteration, we stick to the approximated gradient if $|| \boldsymbol{S}^{k} \boldsymbol{C}^{k} - \boldsymbol{I} ||_{F} \leq \delta$, where $\delta$ is a manually selected threshold to trade-off approximation error and computational time; Otherwise, the original gradient in Eq.~(\ref{eq:Original_S_gradient}) will be used. The stepsize $\mu^{\eta}$ is selected according to Armijo rule \cite{Bertsekas16} at each iteration. More details about the stepsize selection can be found in the supplement. 
%


For solving $\alpha_i$ from Eq.(\ref{eq:ADMM-subproblem2}), we take the derivative of $L_{\rho}\left(\boldsymbol{S}^{k+1}, \{\alpha_i, \boldsymbol{\alpha}^{k,k+1}_{-i} \}, \boldsymbol{\Lambda}^{k} \right)$ with respect to $\alpha_i$, $\forall i=1,2,...,m$ and set it equal to zero, yielding
\begin{align}
& \left\langle \boldsymbol{\Lambda}^{k}, \boldsymbol{S}^{k+1} \boldsymbol{K}_{i} \right\rangle + \rho \left[ \alpha_{i} \cdot tr\left( \boldsymbol{K}_{i}^{T} \boldsymbol{S}^{k+1, T} \boldsymbol{S}^{k+1} \boldsymbol{K}_{i} \right) \right]  \nonumber \\
& + \rho \cdot tr \left[  \left( \tilde{\boldsymbol{K}}_{-i} \boldsymbol{K}_i + \sigma_{e}^2 \boldsymbol{K}_{i} \right) \boldsymbol{S}^{k+1, T} \boldsymbol{S}^{k+1} - \boldsymbol{S}^{k+1} \boldsymbol{K}_{i} \right] = 0.  \label{eq:alpha_derivative} 
\end{align}
where $\tilde{\boldsymbol{K}}_{-i} = \sum_{j=1}^{i-1} \alpha_{j}^{k+1} \boldsymbol{K}_{j} + \sum_{j=i+1}^{m} \alpha_{j}^{k} \boldsymbol{K}_{j}$.
%
%
Following the steps sketched in Appendix C, $\alpha_{i}^{k+1}$ can be re-expressed as
\begin{equation}
\alpha_{i}^{k+1} \!\!=\!\! \left[ \! \alpha_{i}^{k} \!+\! \frac{tr \! \left[ \boldsymbol{K}_{i} \boldsymbol{S}^{k+1} \! \left( \! \boldsymbol{I}_n \!-\! \boldsymbol{S}^{k+1}  \tilde{\boldsymbol{C}}_{i}^{k+1} \!-\! \frac{1}{\rho} \boldsymbol{\Lambda}^{k} \!\right)  \right] }{tr\left( \boldsymbol{K}_{i} \boldsymbol{S}^{k+1} \boldsymbol{S}^{k+1} \boldsymbol{K}_{i} \right)} \! \right]_{+}
\label{eq:alpha_update}
\end{equation}
where
\begin{equation}
\tilde{\boldsymbol{C}}_{i}^{k+1} = \sum_{j=1}^{i-1} \alpha_{j}^{k+1} \boldsymbol{K}_{j} + \sum_{j=i}^{m} \alpha_{j}^{k} \boldsymbol{K}_{j} +\sigma_{e}^{2} \boldsymbol{I}_{n}.
\label{eq:C_tilde}
\end{equation}
It is noted that $\boldsymbol{K}_{i}^{T} = \boldsymbol{K}_{i}$ and $\left( \boldsymbol{S}^{k+1} \right)^T = \boldsymbol{S}^{k+1}$ due to the symmetric property of a kernel matrix. For clarity, we provide detailed steps for implementing the proposed nonlinearly constrained ADMM in Algorithm 2. 


\begin{algorithm}[t]
	\caption{Proposed Nonlinearly Constrained ADMM}
	\KwIn{$\boldsymbol{y}$, $\sigma_{e}^{2}$, and $\boldsymbol{K}_{i}, i = 1,2,\ldots,m$}
	\KwOut{$\boldsymbol{\alpha}_{ML}$}
	Initialization: $k=0$, $\boldsymbol{\alpha}^{0}$, $\boldsymbol{\Lambda}^{0}$, $\rho$, $\rho'$, $\epsilon_{ADMM}$, $\epsilon_{\boldsymbol{S}}$, $It_{\boldsymbol{S}}$. \\ 
	Set $\boldsymbol{C}^{0} =  \sum_{i}^{m} \alpha_{i}^{0} \boldsymbol{K}_{i} + \sigma_{e}^2 \boldsymbol{I}_{n} $, $\boldsymbol{S}^{0} = \left[\boldsymbol{C}^{0}\right]^{-1}$\\
	\For{(outer iterations) $k=0,1,...$}{
	    $\eta = 0$, $ \boldsymbol{S}^{k+1, \eta=0} = \boldsymbol{S}^{k}$ \\
		\For{(inner iterations) $\eta=0,1,...,It_{\boldsymbol{S}}-1$}{
			
			1. Compute $d^{\eta} = -\frac{\nabla_{\boldsymbol{S}}L_{\rho}}{\left| \left|  \nabla_{\boldsymbol{S}}L_{\rho}  \right| \right|_{F}}$ analytically according to Eq.(\ref{eq:Original_S_gradient}) or its approximation obtained using $\boldsymbol{C}^{k}$ to replace the inverse of $\boldsymbol{S}^k$.
					    
			2. Adopt Armijo rule to select the step size $\mu^{\eta}$ and perform:
			  
			$\boldsymbol{S}^{k+1, \eta+1} = \boldsymbol{S}^{k+1, \eta} + \mu^{\eta} d^{\eta}$
			
			\If{$\left| \left| \boldsymbol{S}^{k+1, \eta+1} - \boldsymbol{S}^{k+1, \eta} \right| \right|_{F} \leq \epsilon_{\boldsymbol{S}}$}{
			
			$\eta = \eta+1$
			
			\textbf{break}
			}			
			
			$\eta = \eta+1$
			
		}
		
		Update $\boldsymbol{S}^{k+1} = \boldsymbol{S}^{k+1, \eta}$	
			
		\For{$i =1$ to $m$}{
		Compute $\alpha^{k+1}_{i}$ analytically according to Eq.~(\ref{eq:alpha_update}).

        Compute $\tilde{\boldsymbol{C}}^{k+1}_{i} $analytically according to Eq.~(\ref{eq:C_tilde}).
		}
		\If{$\left| \left| \boldsymbol{\alpha}^{k+1} - \boldsymbol{\alpha}^{k} \right| \right| \leq \epsilon_{ADMM}$}{
		
		$\boldsymbol{\alpha}_{ML} = \boldsymbol{\alpha}^{k+1}$
		
		\Return
		}
		Update $\boldsymbol{C}^{k+1} = \tilde{\boldsymbol{C}}^{k+1}_{m}$
		
		Update $\boldsymbol{\Lambda}^{k+1}$ analytically according to Eq.~(\ref{eq:ADMM-subproblem3}).
		
		Set $k = k+1$.
	}
	
	$\boldsymbol{\alpha}_{ML} = \boldsymbol{\alpha}^{k}$
\end{algorithm}

%
\begin{remark}
When taking the initial guess $\boldsymbol{\Lambda}^{0}$ close to the optimal Lagrange multiplier $\boldsymbol{\Lambda}^{*}$ and taking $\rho$ sufficiently large, solving the unconstrained minimization problem $L_{\rho}(\boldsymbol{S}, \boldsymbol{\alpha}, \boldsymbol{\Lambda})$ can yield points close to the local minimum $\boldsymbol{S}^{*}$ and $\boldsymbol{\alpha}^{*}$ that satisfy the sufficient optimality conditions. Details can be found in sections 4.2 and 5.2 of \cite{Bertsekas16}.
\end{remark}

\subsection{Properties of the Optimized Hyper-Parameters}
\label{subsec:analysisMIN}
This subsection aims to give some additional properties of the optimized GP hyper-parameters from both the statistical signal processing and optimization perspectives. 

\begin{theorem}
Global minimum $(\boldsymbol{\alpha}^{*}, \sigma^{2}_{e,*})$ exists that leads Eq.(\ref{eq:MLnew}) to minus infinity, when the output satisfies $\boldsymbol{y} = \boldsymbol{V}_{i} \boldsymbol{z}$ for $\boldsymbol{z} \in \mathbb{R}^{p}$, $|| \boldsymbol{z} ||_{2}^{2} < \infty$, and $p<n$, where the $n \times p$ matrix $\boldsymbol{V}_{i} \triangleq \boldsymbol{U}_{i} \boldsymbol{\Sigma}_{i}^{1/2}$ with $\boldsymbol{\Sigma}_{i}^{1/2}$ being the diagonal matrix of square-root of the $p$ non-zero eigenvalues and $\boldsymbol{U}_{i}$ of size $n \times p$ containing the corresponding eigenvectors of a rank-deficient sub-kernel matrix $\boldsymbol{K}_i$. 
\end{theorem}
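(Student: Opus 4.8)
The plan is to exhibit an explicit family of feasible hyper-parameters along which the objective in Eq.(\ref{eq:MLnew}) diverges to $-\infty$, so the infimum is unbounded below. The natural choice is to activate only the rank-deficient sub-kernel $\boldsymbol{K}_i$, namely set $\alpha_j = 0$ for all $j \neq i$, fix $\alpha_i$ at an arbitrary positive value, and then let the noise variance $\sigma_{e}^{2} \to 0^{+}$. Under this restriction $\boldsymbol{C} = \alpha_i \boldsymbol{K}_i + \sigma_{e}^{2} \boldsymbol{I}_n$, so $\boldsymbol{C}$ and $\boldsymbol{K}_i$ share eigenvectors. First I would take the eigendecomposition $\boldsymbol{K}_i = \boldsymbol{U}_i \boldsymbol{\Sigma}_i \boldsymbol{U}_i^{T}$, complete $\boldsymbol{U}_i$ to an orthonormal basis of $\mathbb{R}^{n}$ by appending the $n-p$ eigenvectors spanning $\mathrm{null}(\boldsymbol{K}_i)$, and read off the eigenvalues of $\boldsymbol{C}$ as $\alpha_i \lambda_j + \sigma_{e}^{2}$ for $j=1,\ldots,p$ on the range of $\boldsymbol{K}_i$, together with $\sigma_{e}^{2}$ of multiplicity $n-p$ on its null space.

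Next I would evaluate the two terms of $l(\boldsymbol{\theta})$ separately in this eigenbasis. The log-determinant becomes $\log\det\boldsymbol{C} = \sum_{j=1}^{p} \log(\alpha_i \lambda_j + \sigma_{e}^{2}) + (n-p)\log\sigma_{e}^{2}$; since $p<n$, the second summand tends to $-\infty$ as $\sigma_{e}^{2}\to 0^{+}$, while the first stays bounded because each factor tends to the strictly positive limit $\alpha_i\lambda_j$. For the quadratic term I would invoke the range hypothesis $\boldsymbol{y} = \boldsymbol{V}_i \boldsymbol{z} = \boldsymbol{U}_i \boldsymbol{\Sigma}_i^{1/2}\boldsymbol{z}$, which forces $\boldsymbol{y}$ to have no component in $\mathrm{null}(\boldsymbol{K}_i)$ and gives $\boldsymbol{U}_i^{T}\boldsymbol{y} = \boldsymbol{\Sigma}_i^{1/2}\boldsymbol{z}$, so that $\boldsymbol{y}^{T}\boldsymbol{C}^{-1}\boldsymbol{y} = \sum_{j=1}^{p} \lambda_j z_j^{2}/(\alpha_i\lambda_j + \sigma_{e}^{2}) \to \Vert\boldsymbol{z}\Vert_{2}^{2}/\alpha_i < \infty$, where the finiteness of the limit is precisely what $\Vert\boldsymbol{z}\Vert_{2}^{2}<\infty$ guarantees. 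Summing the two contributions shows $l(\boldsymbol{\theta})\to -\infty$, which proves the claim.

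I expect the crux of the argument to be the boundedness of the quadratic term, which is exactly where the hypothesis $\boldsymbol{y} = \boldsymbol{V}_i\boldsymbol{z}$ is indispensable. Had $\boldsymbol{y}$ possessed any nonzero projection onto $\mathrm{null}(\boldsymbol{K}_i)$, that projection would contribute a term of order $1/\sigma_{e}^{2}$ to $\boldsymbol{y}^{T}\boldsymbol{C}^{-1}\boldsymbol{y}$, which grows far faster than the $(n-p)\log\sigma_{e}^{2}$ decay of the log-determinant and would instead drive $l(\boldsymbol{\theta})$ to $+\infty$; so the comparison of the two growth rates is the one genuinely delicate step. A secondary point I would state carefully is that the divergence is realized only in the open-boundary limit $\sigma_{e}^{2}\to 0^{+}$ rather than at an interior stationary point, so the asserted ``global minimum'' $(\boldsymbol{\alpha}^{*},\sigma^{2}_{e,*})$ is attained in this limiting sense: at $\sigma_{e}^{2}=0$ exactly the matrix $\boldsymbol{C}$ is singular, and both terms are to be read through the pseudo-inverse and the convention $\log 0 = -\infty$.
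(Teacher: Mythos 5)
Your argument is correct and follows what is essentially the paper's own route (the proof is deferred to the conference version and the supplement, but the theorem's very definition of $\boldsymbol{V}_{i}$ via the eigendecomposition of the rank-deficient $\boldsymbol{K}_i$ is set up precisely for the construction you give): activate only the deficient sub-kernel, diagonalize $\boldsymbol{C}=\alpha_i\boldsymbol{K}_i+\sigma_e^2\boldsymbol{I}_n$, observe that the range hypothesis on $\boldsymbol{y}$ keeps the quadratic term bounded while $(n-p)\log\sigma_e^2\to-\infty$. Your closing remarks on the rate comparison and on the boundary nature of the minimizer are also the right caveats to attach to the theorem's (loosely worded) claim that a ``global minimum exists.''
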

\begin{proof}
The proof can be found in \cite{Yin18} or in the supplement.
\end{proof}

\begin{theorem}
Every local minimum of Eq.(\ref{eq:MLnew}) is achieved at a sparse solution, regardless of whether noise is present or not. 
\end{theorem}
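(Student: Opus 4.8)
The plan is to argue by contradiction using the second-order necessary optimality conditions restricted to the strictly positive (active) coordinates. Fix a local minimizer $(\boldsymbol{\alpha}^{*}, \sigma_{e}^{2})$ of Eq.(\ref{eq:MLnew}) and let $\mathcal{A} = \{ i : \alpha_{i}^{*} > 0 \}$ be its support, with $\boldsymbol{C}^{*} = \sum_{i \in \mathcal{A}} \alpha_{i}^{*} \boldsymbol{K}_{i} + \sigma_{e}^{2} \boldsymbol{I}_{n}$. Because the objective is finite at any genuine minimizer, $\boldsymbol{C}^{*} \succ \boldsymbol{0}$, so $\boldsymbol{C}^{*-1/2}$ exists; this is exactly what makes the argument go through both with noise ($\sigma_{e}^{2} > 0$) and without ($\sigma_{e}^{2} = 0$, once the active sub-kernels already render $\boldsymbol{C}^{*}$ invertible), while the degenerate minima attaining $-\infty$ are already sparse by the preceding theorem, which activates a single rank-deficient sub-kernel. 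I would show that $|\mathcal{A}| \le n$, which is genuine sparsity in the regime of interest $m \gg n$.

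First I would record the first-order information. Since the constraints $\alpha_{i} \ge \boldsymbol{0}$ are linear, the KKT conditions hold at the minimizer without any constraint qualification, and complementary slackness forces $\partial l / \partial \alpha_{i} = 0$ for every active index $i \in \mathcal{A}$. Recalling from Section~\ref{subsec:HyperparaOpt} that $\partial l / \partial \alpha_{i} = \mathrm{tr}(\boldsymbol{C}^{-1} \boldsymbol{K}_{i}) - \boldsymbol{y}^{T} \boldsymbol{C}^{-1} \boldsymbol{K}_{i} \boldsymbol{C}^{-1} \boldsymbol{y}$, it follows that for \emph{any} perturbation direction $\boldsymbol{d}$ supported on $\mathcal{A}$ the scalar map $\phi(t) = l(\boldsymbol{\alpha}^{*} + t\boldsymbol{d})$ satisfies $\phi'(0) = 0$, and $\boldsymbol{\alpha}^{*} + t\boldsymbol{d}$ stays feasible for both signs of small $t$. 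Local minimality then forces the second-order necessary condition $\phi''(0) \ge 0$ for all such $\boldsymbol{d}$. Writing $\boldsymbol{M} = \sum_{i \in \mathcal{A}} d_{i} \boldsymbol{K}_{i}$ and differentiating twice (using $\mathrm{d}\boldsymbol{C}^{-1} = -\boldsymbol{C}^{-1}\boldsymbol{M}\boldsymbol{C}^{-1}$) yields $\phi''(0) = 2\, \boldsymbol{y}^{T} \boldsymbol{C}^{*-1} \boldsymbol{M} \boldsymbol{C}^{*-1} \boldsymbol{M} \boldsymbol{C}^{*-1} \boldsymbol{y} - \mathrm{tr}(\boldsymbol{C}^{*-1} \boldsymbol{M} \boldsymbol{C}^{*-1} \boldsymbol{M})$.

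The core of the argument is to exhibit a direction making this strictly negative once $|\mathcal{A}| > n$. Consider the $|\mathcal{A}|$ vectors $\boldsymbol{k}_{i} := \boldsymbol{K}_{i} \boldsymbol{C}^{*-1} \boldsymbol{y} \in \mathbb{R}^{n}$; if $|\mathcal{A}| > n$ they are linearly dependent, so there is a nonzero $\boldsymbol{d}$ (supported on $\mathcal{A}$) with $\sum_{i \in \mathcal{A}} d_{i} \boldsymbol{k}_{i} = \boldsymbol{M} \boldsymbol{C}^{*-1} \boldsymbol{y} = \boldsymbol{0}$. Substituting, the data-fit term vanishes identically, leaving $\phi''(0) = -\,\mathrm{tr}(\boldsymbol{C}^{*-1} \boldsymbol{M} \boldsymbol{C}^{*-1} \boldsymbol{M}) = -\, \| \boldsymbol{C}^{*-1/2} \boldsymbol{M} \boldsymbol{C}^{*-1/2} \|_{F}^{2}$, which is strictly negative provided $\boldsymbol{M} \ne \boldsymbol{0}$. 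This contradicts $\phi''(0) \ge 0$, so $|\mathcal{A}| \le n$.

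I expect the main obstacle to be guaranteeing $\boldsymbol{M} \ne \boldsymbol{0}$ from $\boldsymbol{d} \ne \boldsymbol{0}$, which is precisely the requirement that the active sub-kernel matrices $\{\boldsymbol{K}_{i} : i \in \mathcal{A}\}$ be linearly independent in the space of symmetric matrices; I would adopt this as a standing assumption, justified because the sub-kernels are evaluated at distinct grid frequencies $\mu_{i}$ (distinct cosine-modulated Gaussians), or else phrase the bound in terms of $\dim\,\mathrm{span}\{\boldsymbol{K}_{i}\}$. The remaining work is bookkeeping: confirming that the linearity of the inequality constraints legitimizes the KKT and second-order machinery, and checking the noiseless case by noting that a finite objective already forces $\boldsymbol{C}^{*} \succ \boldsymbol{0}$. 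Together these establish that at most $n$ of the $m$ weights can be nonzero, i.e., every local minimum is attained at a sparse $\boldsymbol{\alpha}^{*}$.
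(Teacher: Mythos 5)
The paper offers no proof of its own for this theorem: it simply points to \cite[Theorem 2]{WR04}, whose argument rests on the concave-plus-convex structure of the objective and an extreme-point/basic-feasible-solution argument on a polytope of equivalent hyper-parameters. Your second-order route is therefore genuinely different and self-contained, and its core computation is correct: with $\boldsymbol{M}=\sum_{i\in\mathcal{A}}d_i\boldsymbol{K}_i$ one indeed gets $\phi''(0)=2\,\boldsymbol{y}^{T}\boldsymbol{C}^{*-1}\boldsymbol{M}\boldsymbol{C}^{*-1}\boldsymbol{M}\boldsymbol{C}^{*-1}\boldsymbol{y}-\mathrm{tr}(\boldsymbol{C}^{*-1}\boldsymbol{M}\boldsymbol{C}^{*-1}\boldsymbol{M})$, the first-order and two-sided second-order necessary conditions on the active face are legitimate, and annihilating the data term by choosing $\boldsymbol{d}$ in the kernel of $\boldsymbol{d}\mapsto\boldsymbol{M}\boldsymbol{C}^{*-1}\boldsymbol{y}$ correctly isolates the strictly concave log-det curvature.

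The genuine gap is the step you flagged but resolved the wrong way: the ``standing assumption'' that the active sub-kernel matrices are linearly independent is not merely unproven here, it is \emph{necessarily false} in exactly the regime your contradiction needs. For the 1-D time series setting of this paper every $\boldsymbol{K}_i$ is an $n\times n$ symmetric Toeplitz matrix, and the space of such matrices has dimension $n$; hence whenever $|\mathcal{A}|>n$ the family $\{\boldsymbol{K}_i:i\in\mathcal{A}\}$ is automatically dependent, the kernel of $\boldsymbol{d}\mapsto\boldsymbol{M}\boldsymbol{C}^{*-1}\boldsymbol{y}$ may consist entirely of directions with $\boldsymbol{M}=\boldsymbol{0}$ (for generic $\boldsymbol{C}^{*-1}\boldsymbol{y}$ the map $\boldsymbol{M}\mapsto\boldsymbol{M}\boldsymbol{C}^{*-1}\boldsymbol{y}$ is injective on the Toeplitz space), and no strictly negative curvature direction exists. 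What your argument actually proves is $\dim\,\mathrm{span}\{\boldsymbol{K}_i:i\in\mathcal{A}\}\le n$, which does not by itself bound $|\mathcal{A}|$. To close the theorem as stated you must add the step your fallback only gestures at: the set $\{\boldsymbol{\alpha}'\ge\boldsymbol{0}$ supported on $\mathcal{A}:\sum_i\alpha_i'\boldsymbol{K}_i=\boldsymbol{C}^{*}-\sigma_e^{2}\boldsymbol{I}_n\}$ is a bounded polytope (bounded because $\mathrm{tr}(\boldsymbol{K}_i)=n>0$ fixes $\sum_i\alpha_i'$) on which the objective is constant, and its extreme points have at most $\dim\,\mathrm{span}\{\boldsymbol{K}_i\}\le n$ nonzero entries. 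This is precisely why the theorem is phrased as the minimum being ``achieved at'' a sparse solution rather than every local minimizer being sparse; with that extreme-point step appended, your proof is complete.
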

\begin{proof}
See \cite[Theorem 2]{WR04}. 
\end{proof}

\begin{remark}
The sparseness of the ML solution according to the above theorem is celebrating for the proposed 1-D GSM kernel. The reasons are twofold. First, it means that only the frequencies thought to be important for modeling by the data will be pinpointed, endowing good interpretation of the kernel. Second, by avoiding to use all the grids (or model freedom) to fit the data, over-parameterization problem as indicated by Proposition 1 can be effectively alleviated. 
\end{remark}

\section{Experimental Results}
\label{sec:results}
In this section, we aim to investigate the prediction performance of the proposed GSM kernel based GP and compare it with the SM kernel based GP proposed by Wilson \textit{et.al.} in \cite{WA13} and the sparse spectrum GP proposed by L\'{a}zaro-Gredilla \textit{et.al.} in \cite{Gredilla10} from various aspects. We picked up in total 8 classic time series data sets for test. Descriptions of the data are shown in Table~\ref{tab:table1}. The training data, $\mathcal{D}$, is used for optimizing the GP hyper-parameters; while the test data, $\mathcal{D}_{*}$, is used for evaluating the prediction MSE. 
%
\begin{table}[h]
\begin{center}
   \caption{Details of the selected data sets.}
   \label{tab:table1}
    \begin{tabular}{|c|p{0.34\columnwidth}|c|c|}
    \hline
    Name & Description & Training $\mathcal{D}$ & Test $\mathcal{D}_{*}$ \\ \hline
    ECG & Electrocardiography of an ordinary person measured over a period of time & 680 & 20 \\ \hline
    CO2 & CO2 concentration made between 1958 and the end of 2003 & 481 & 20 \\ \hline
    Electricity & Monthly average residential electricity usage in Iowa City 1971-1979 & 86 & 20 \\ \hline
    Employment & Wisconsin employment time series, trade, Jan. 1961 – Oct. 1975 & 158 & 20 \\ \hline
    Hotel & Monthly hotel occupied room average 1963-1976  & 148 & 20 \\ \hline
    Passenger & Passenger miles (Mil) flown domestic U.K., Jul. 1962-May 1972 & 98 & 20 \\ \hline 
    Clay & Monthly production of clay bricks: million units. Jan 1956 – Aug 1995 & 450 & 20 \\ \hline 
    Unemployment & Monthly U.S. female (16-19 years) unemployment figures (thousands) 1948-1981 & 380 & 20 \\ \hline 
    \end{tabular}
\end{center}
\end{table}

\subsection{Algorithmic Setup}
\noindent For the proposed GSM kernel based GP, short for GSMGP, we provide its setup in each individual subsection. Source code and all test data are available online.\footnote{\url{https://github.com/Paalis/MATLAB_GSM}}
%
%
%

\noindent The SM kernel based GP, short for SMGP, proposed by Wilson \textit{et.al.}:
\begin{itemize}
\item We use the source code provided on the author's web page and follow the default setup suggested therein.\footnote{\url{https://people.orie.cornell.edu/andrew/code/}}
\item We follow the initialization strategy given on the author's web page as well. Random restart is, however, not used.
\item The number of Gaussian mixture components $Q$ is chosen to be 10 or 500 for the SM kernel. 
\item The SMGP model hyper-parameters are determined by a gradient-descent type method. 
\end{itemize}

\noindent The Sparse spectrum (SS) GP, short for SSGP, proposed by L\'{a}zaro-Gredilla \textit{et.al.}:
\begin{itemize}
\item We use the source code provided on the author's web page and follow the default setup \footnote{\url{http://www.tsc.uc3m.es/~miguel/downloads.php}}.
\item We follow the strategy given by the authors to initialize the hyper-parameters. The number of basis is set to $m = 500$ in the simulations.
\item The SSGP model hyper-parameters are determined by a conjugate-gradient method.
\end{itemize}
It is noteworthy that the independent noise variance parameter $\sigma_{e}^2$ is estimated using the cross-validation filter type method\cite{Garcia10} and it is kept common to all above GP models for fair comparisons. In the following experiments, we solely compare the performance of the GSM kernel, SM kernel, and SS kernel. In \cite{WA13, Wilson14, Gredilla10}, extensive experiments with both synthesized and real data have confirmed the effectiveness of the SM kernel and SS kernel as compared to the elementary kernels such as the SE kernel and Matern kernel. 

\subsection{Performance of the 2-D GSM kernel with MM Method}
This subsection is a wrap-up of the results obtained in \cite{Yin18} for the 2-D GSM kernel using a big number of grids generated from 2-D space. Therein, the test involved 30 independent Monte-Carlo (MC) runs, and in each MC run, a new set of 20,000 grid points were randomly generated in the 2-D $(\mu-\sigma)$ space confined by $\mu_{low} = 0$, $\mu_{high} = 0.5$, $\sigma^2_{low} = 0$ and $\sigma^{2}_{high} = 0.15$. We initialize the weights of the sub-kernels, $\boldsymbol{\alpha}$, to a vector of zeros for the 2-D GSM kernel. The prediction MSE is evaluated for all selected GP models. Besides, we count the number of MC runs (out of 30 in total), in which one method stucked at a bad/meaningless local minimum (i.e., does not provide a meaningful prediction) and calculate the ratio, referred to as program fail rate (PFR) in this paper. Note that, the meaningless results were excluded when we compute the MSE. 
\begin{table}[t]
\begin{center}
    \caption{Performance comparison between the proposed GSMGP (with 2-D grids) and its competitors, SSGP and SMGP, in terms of the MSE and the PFR.}
    \label{tab:table2}
    \begin{tabular}{|c|c|c|c|c|c|}
    \hline
    Name & SSGP & SMGP & SMGP & GSMGP & GSMGP \\ 
             & MSE & MSE & PFR & MSE & PFR \\ \hline
    ECG &1.6E-01 & 2.1E+00 & 0.63 & NA & NA \\ \hline
    CO2    &2.0E+02 & 7.4E+04 & 0.83 & NA & NA \\ \hline
    Electricity &8.2E+03 & 1.8E+04 & 0.47 & 6.8E+03 & 0.2\\ \hline
    Employment &7.7E+01 & 2.3E+04 & 0.27 & 3.9E+01 & 0.07\\ \hline
    Hotel  &1.9E+04 &2.6E+05 & 0.33 & 2.4E+03 & 0\\ \hline
    Passenger &6.9E+02 & 3.5E+03 & 0.37 & 1.7E+02 & 0\\ \hline
    Clay &5.3E+02 & 4.8E+03 & 0.93 & NA & NA\\ \hline
    Unemploy &2.1E+04 & 1.2E+05 & 0.9 & NA & NA\\ \hline
    \end{tabular}
\end{center}
\end{table}

From the results shown in Table~\ref{tab:table2}, we can conclude that the proposed 2-D GSM kernel based GP regression has gained well improved prediction MSE and stability as compared to its competitors. We did not show the PFR of the SSGP becasue it can always get the trend/envelop of the data but fail to fit small-scaled, fine structures. Whereas, the SMGP using $Q=10$ Gaussian modes can better fit the data with a good starting point but it may even fail to capture the trend of the data with a bad starting point. The performance of the proposed GP model becomes better and more stable, when the number of the grids grows beyond around 10,000. In Table~\ref{tab:table2}, the results of the GSMGP on the \textit{CO2, clay, and unemployment} data sets are not available since the large size of the unknown weights, $dim(\boldsymbol{\alpha})$, and long data record jointly make the program beyond the processing capability of our computer.\footnote{Specifications: Intel(R) Core(TM) i7-8700 CPU 3.2GHz, 3192MHz, 6 cores, 16GB RAM with MATLAB2017a installed} Apart from the improved performance, the average number of non-zero $\boldsymbol{\alpha}$ values generated by the ML method is equal to 26, 19, 17, 22, respectively for the four data sets that can be handled. These results confirm with Theorem 3, claiming that the ML solution of our estimation problem is sparse. 

The average computational time for the MM method to solve the GP hyper-parameters in one MC run is around 1 minute, 25 minutes, 10 minutes, 9 minutes, respectively for the four smaller data sets that can be handled. From next subsection on, we will solely focus on the new 1-D GSM kernel with much reduced model complexity.

\subsection{Performance of the 1-D GSM Kernel with MM Method}
In the previous subsection, we showed the performance of the GSM kernel with 2-D grids. The model complexity, $m$, is expected to be large for good performance. We need to reduce the model complexity. We resort to the GSM kernel with 1-D grids as given in Eq.(\ref{eq:grid-SM-kernel-fixed-sigma}), for which we sample $m$ frequency parameters $\mu_i$, $i=1,2,...,m$ uniformly from the given frequency region $[0,1/2)$, while fix the variance parameter to a small constant, $\sigma = 0.001$. The GP hyper-parameters are solved via the sequential MM method, for which the initial guess of $\alpha_i$, is first generated from a Gaussian distribution with zero mean and large variance, say $\sigma^2_{\alpha} = 10$, and then finalized by $\max(\alpha_i, 0)$. Random restart is not used for fair comparison. 
\begin{table}[t]
\begin{center}
    \caption{Prediction MSE generated by two GSM kernels (one is using $m=20000$ 2-D grids vs. the other using $m=500$ 1-D grids).}
    \label{tab:table3}
    \begin{tabular}{|c|c|c|c|c|c|}
    \hline
    Name &1-D & 1-D & 1-D & 2-D & 2-D\\ 
             & MSE & Iterations & PFR & MSE & Iterations \\ \hline
    ECG & 1.3E-02 & 24 &0.01 & NA & NA \\ \hline
    CO2 & 1.5E+00 & 10 &0.17 & NA & NA \\ \hline
    Electricity & 4.7E+03 & 2 &0.07 & 6.8E+03 & 2 \\ \hline
    Employment & 1.1E+02 & 23 &0.06 & 3.9E+01 & 14 \\ \hline
    Hotel & 8.9E+02 & 14 &0.02 & 2.4E+03 & 6 \\ \hline
    Passenger & 1.9E+02 & 28 &0.02 & 1.7E+02 & 13 \\ \hline
    Clay  & 1.9E+02 & 25 & 0.12 & NA & NA \\ \hline
    Unemploy. & 3.6E+03 & 9 &0.10 & NA & NA \\ \hline
    \end{tabular}
\end{center}
\end{table}
\begin{table}[h]
	\begin{center}
		\caption{Prediction MSE of the GSMGP with $m=500$ 1-D grids vs. SMGP with $Q=500$ Gaussian modes.}
		\label{tab:table7}
		\begin{tabular}{|c|c|c|c|c|c|}
			\hline
			Name &GSMGP & GSMGP & SMGP & SMGP & SMGP\\ 
			& MSE & CT (s) & MSE & CT (s) &PFR \\ \hline
			ECG & 1.3E-02 & 140.4 & 1.9E-02 & 3.4E+03 & 0.3 \\ \hline
			CO2 & 1.5E+00 & 69.3 & 1.1E+00 & 2.0E+03 & 0.07 \\ \hline
			Electricity & 4.7E+03 & 1.46 & 7.5E+03 & 1.0E+02 & 0 \\ \hline
			Employment & 1.1E+02 & 31.2 & 0.7E+02 & 2.5E+02 & 0.03\\ \hline
			Hotel & 8.9E+02 & 17.5 & 2.8E+03 & 2.8E+02 & 0.97 \\ \hline
			Passenger & 1.9E+02 & 14.7 & 1.6E+02 & 1.1E+02 & 0.23\\ \hline
			Clay  & 1.9E+02 & 140.4 & 3.3E+02 & 3.4E+03 & 0 \\ \hline
			Unemploy. & 3.6E+03 & 42.3 & 1.4E+04 & 1.4E+03 & 0.57\\ \hline
		\end{tabular}
	\end{center}
\end{table}

To shed some light on its performance, we let $m=100, 200, 300, 400, 500$ and repeat the tests as conducted in the previous subsection for each $m$. We compare the prediction MSE obtained by the 2-D GSM kernel with 20,000 grids randomly sampled from the 2-D $(\mu, \sigma)$-space and the 1-D GSM kernel with only 500 grids uniformly selected from the 1-D $\mu$-space (with a fixed $\sigma=0.001$). The results are shown in Table~\ref{tab:table3}. In total 100 independent MC runs were conducted to compute the program fail rate as well as the prediction MSE after excluding the meaningless estimates. To better visualize the results, we show the training and prediction performance of the resulting GSMGP on the \textit{Electricity} and \textit{Unemployment} data sets in one specific MC run in Fig.~\ref{fig:figure2}. Similar results for all data sets are given in the supplement. 
\begin{figure*}[t]
\centering
\includegraphics[width=.45\textwidth]{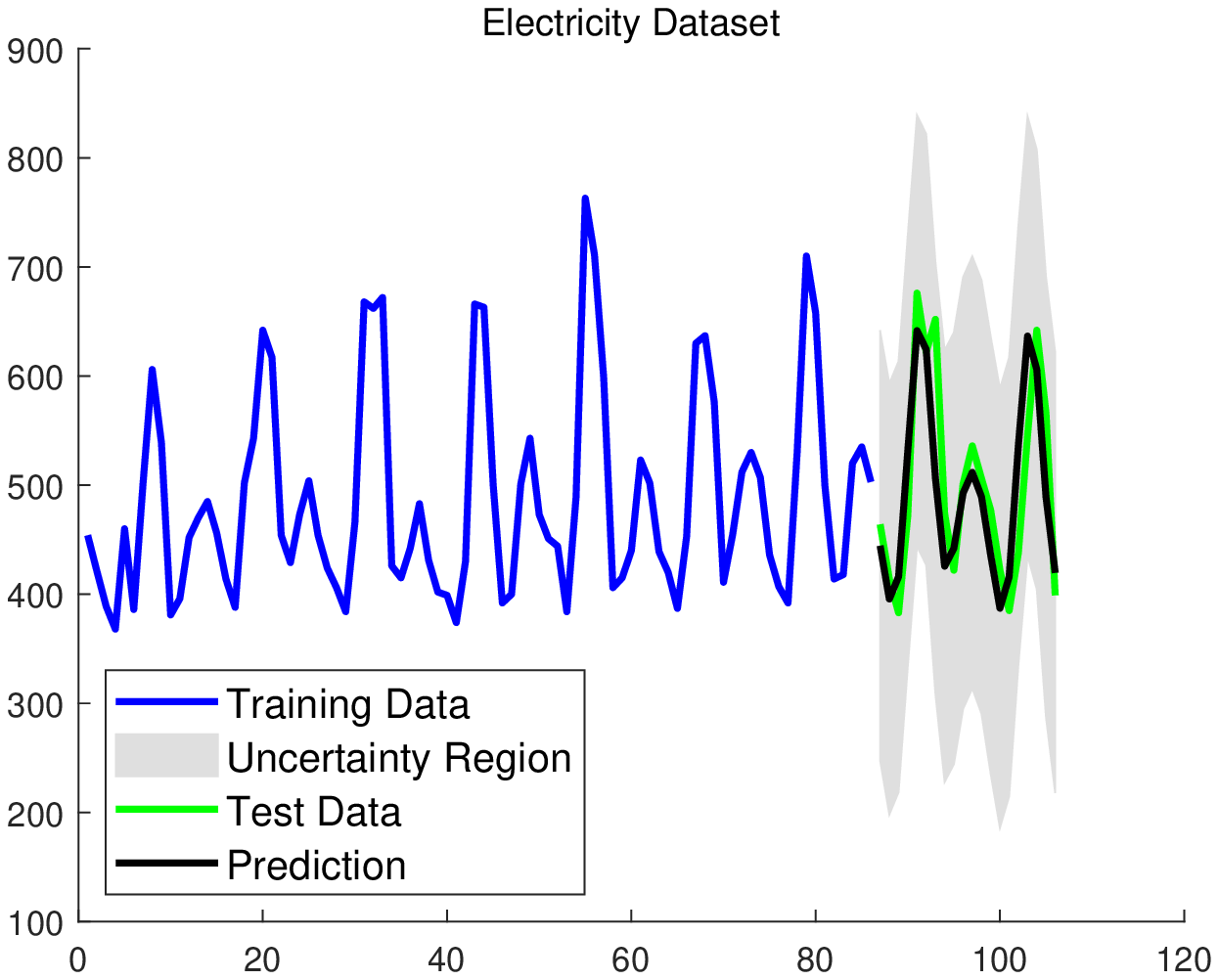}\quad 
\includegraphics[width=.45\textwidth]{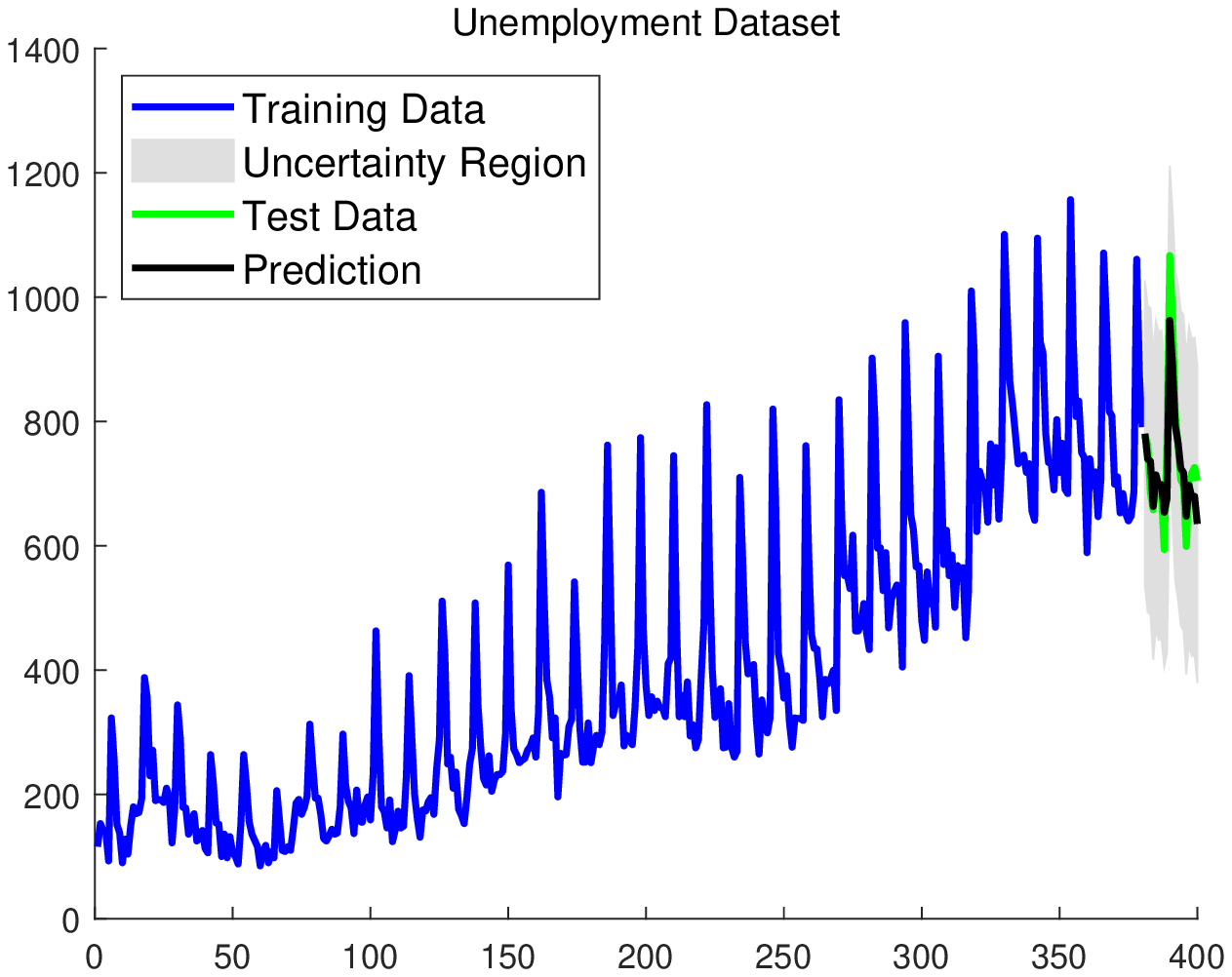}
\caption{Training and test performance of the GSMGP using 1-D GSM kernel with $\sigma = 0.001$ and $m=500$ uniformly generated grids. The optimal weights are solved via the MM method.}
\label{fig:figure2}
\end{figure*}
\begin{figure}[t]
	\includegraphics[width=0.45\textwidth]{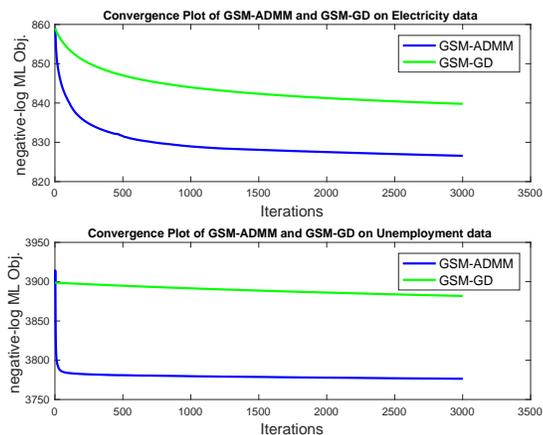}
	\caption{Negative log-likelihood versus iterations of the proposed ADMM as compared to the classic gradient projection.}
	 \label{fig:convergencePlot}
\end{figure}
\begin{figure}[t]
	\includegraphics[width=0.45\textwidth]{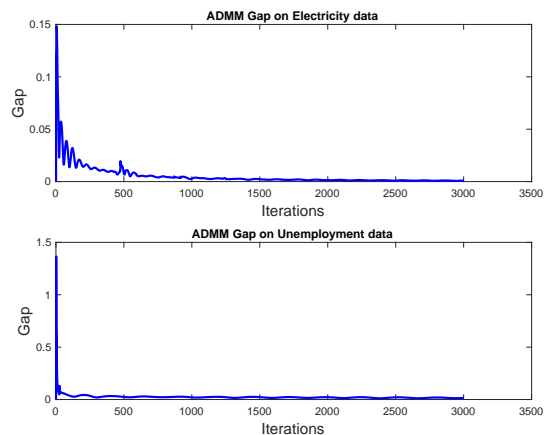}
	\caption{Gap of the equality constraint, $|| \boldsymbol{S}^{k} \boldsymbol{C}^{k} - \boldsymbol{I} ||_{F}$ versus the iterations of the proposed ADMM.}
	 \label{fig:gapEquality}
\end{figure}

Some observations from our experimental results are as follows. First, in a majority of cases, the prediction MSE generated by the 1-D GSM kernel degrades slightly as compared to that generated by the 2-D GSM kernel. This result is not surprising as the latter better covers the parameter space. On the other hand, the 2-D GSM kernel may overfit the training data in some cases, as seen for the \textit{Electricity} data set in Table~\ref{tab:table3}. Second, due to the significantly reduced model complexity, the MM method can handle much longer time series with the 1-D GSM kernel. Although the number of iterations required by the MM method increases for the 1-D GSM kernel, the overall computational time for the eight data sets has been reduced significantly from several minutes to several seconds. The surprisingly low computational time is also due to the low-rank property of all GSM sub-kernel matrices with $\sigma=0.001$, supported by Theorem 1. Lastly, although not shown in the Table, the performance of the 1-D GSM kernel becomes better and more stable as $m$ increases to around 500 grids but further increment would not help much for the selected data sets. Moreover, in Table~\ref{tab:table4}, we compare the GSMGP with an upgraded SMGP with $Q=500$, which leads to much improved prediction MSE and more stable numerical solution than that of $Q=10$ Gaussian modes, however at the cost of much longer computational time. But still for some data sets, e.g., the \textit{Unemployment} and \textit{Hotel}, the SMGP gets stuck at bad local minimal more frequently than our GSMGP. As a summary, the new 1-D GSM kernel has achieved overall better prediction results with much less reduced computational time and higher stability as compared to the original SM kernel with a large number of Gaussian modes.

In the following, we show the benefits of using Nystr\"{o}m to further speed up the computations. We still stick to the GSM kernel with $m=500$ 1-D grids and fixed $\sigma = 0.001$. We randomly sample only $5\%$ percent of the complete training inputs for constructing a Nystr\"{o}m approximation of every sub-kernel matrix $\boldsymbol{K}_{i}, i=1,2,...,m$. The results in Table~\ref{tab:table4} show the prediction MSE as well as the computational time that the MM method requires to converge in one particular MC run initialized with all zeros. The total computation time is not reduced much in this case because the sub-kernel matrices have low-rank (refer to the fifth property of the GSM kernel as given by Eq.(\ref{eq:grid-SM-kernel-fixed-sigma})) and this nice property matches perfectly with the MM method according to our remark 3 given in Section~\ref{sec:hyperpara-opt-GSMKernel}. For all data sets, we computed the rank of all sub-kernel matrices numerically and recorded the maximum rank, the minimum rank and the mean rank in Table~\ref{tab:table4b} which demonstrate $\max_{i} rank \{ \boldsymbol{K}_i \} \approx 2 \min_{i} rank \{ \boldsymbol{K}_i \} \approx 1.3 \sqrt{n}$; the mean rank is fairly close to the maximum rank because most of the sub-kernels have rank close to the maximum rank. Therefore, in light of the remark 3 of Section~\ref{sec:hyperpara-opt-GSMKernel}, the computational complexity of the MM method is approximately $\mathcal{O}(m n^{3/2})$ instead of the worst case $\mathcal{O}(mn^3)$. When we handle longer time series, random Fourier feature approximation may help save more memory while maintain similar RAE, as was shown in the supplement.
\begin{table}[t]
\begin{center}
    \caption{Prediction MSE generated by the 1-D GSM kernel versus its Nystr\"{o}m approximation, short as NY-GSM.}
    \label{tab:table4}
    \begin{tabular}{|c|c|c|c|c|c|}
    \hline
    Name &GSM & GSM  & NY-GSM  & NY-GSM \\ 
             & MSE & CT   & MSE & CT \\ \hline
    ECG & 1.3E-02 & 122s  & 1.3E-02 & 116s \\ \hline
    CO2 & 9.3E-01 & 24s  & 9.3E-01 & 22s\\ \hline
    Electricity & 3.0E+03 & 0.9s & 3.0E+03 & 0.2s\\ \hline
    Employment & 6.8E+01 & 12s & 6.8E+01 & 5s\\ \hline
    Hotel & 4.3E+02 & 3s & 4.3E+02 & 1s\\ \hline
    Passenger & 2.4E+02 & 8s & 2.9E+02 & 3s\\ \hline
    Clay  & 8.5E+01 & 60s & 8.5E+01 & 50s\\ \hline
    Unemploy. & 2.3E+03 & 8s & 2.3E+03 & 3s\\ \hline
    \end{tabular}
\end{center}
\end{table}
\begin{table}[h]
\begin{center}
    \caption{Maximum rank,  minimum rank, and mean rank of the selected $m=500$ GSM sub-kernel matrices used in the above experiments.}
    \label{tab:table4b}
    \begin{tabular}{|c|c|c|c|c|}
    \hline
    Name &$\max$ rank & $\min$ rank & mean rank  \\ 
             & GSM sub-kernels & sub-kernels  & sub-kernels\\ \hline
    ECG & 34 & 17 & 33 \\ \hline
    CO2 & 27 & 13 & 25  \\ \hline
    Electricity & 14 & 7 & 13 \\ \hline
    Employment & 16 & 8 & 15  \\ \hline
    Hotel & 14 & 7 & 13 \\ \hline
    Passenger & 14 & 7 & 13 \\ \hline
    Clay  & 26 & 13 & 25\\ \hline
    Unemployment & 24 & 12 & 23\\ \hline
    \end{tabular}
\end{center}
\end{table}

In all above experiments, we use the default setup of the 1-D GSM kernel, which is very simple to use. But as we pointed out in Section~\ref{sec:GSMKernel}, using the nonparametric Welch periodogram of the data to guide an advanced setup may be beneficial in various aspects. Due to space limitation, we show the periodogram of each data set versus the spectral density constructed using the optimal weights obtained for one specific MC in the supplement. As we can see, the periodogram indeed provides rich information for configuring the GSM kernel and optimizing its associated hyper-parameters.

\subsection{Performance of the 1-D GSM Kernel with ADMM}
In section~\ref{subsec:ADMM-solution}, we introduced a nonlinearly constrained ADMM, short as GSM-ADMM, for optimizing the hyper-parameters of the 1-D GSM kernel, i.e., the weights $\boldsymbol{\alpha}$. In the following experiments, we aim to compare it with other two numerical methods, namely the classic gradient projection (details see our supplement) and the sequential MM method, short as GSM-GD and GSM-MM, respectively. The performance is measured in terms of the objective function value and the prediction MSE.  

We conduct some experiments on a small data set and a moderate data set, as the proposed method is not suitable for big data set due to the $\mathcal{O}(n^3)$ complexity. To keep alignment with the previous experiments, we stick to the 1-D GSM kernel with $m = 500$ grids uniformly sampled from $\left[0,1/2\right)$ and fixed $\sigma = 0.001$. 

The algorithmic setup of our nonlinearly constrained ADMM as given in Algorithm~2 are in order. To update $\boldsymbol{S}$, we let $It_{\boldsymbol{S}} = 1000, \epsilon_{\boldsymbol{S}} = 10^{-15}, \delta = 1$. For selecting the step size in light of the Armijo rule, we let $s = 10^{-4}, \beta = 1/5, h = 10^{-5}$. The remainders are $\rho = 100, \rho^{\prime} = \rho/2 = 50, \epsilon_{ADMM} = 10^{-3}$. 

As for the initial guess, we let $\boldsymbol{\Lambda}^{(0)} = \boldsymbol{I}$, for the \textit{Electricity} data set; $\boldsymbol{\alpha}^{(0)}$ is obtained by fitting the nonparametric Welch periodogram via the $L_1$-norm regularized least-squares mentioned in Section~\ref{sec:GSMKernel}, while for the \textit{Unemployment} data set, $\boldsymbol{\alpha}^{(0)}$ is obtained by running just one iteration of the sequential MM method. The same initial guesses were applied to the GSM-GD for fair comparisons. The experimental results are summarized in Table~\ref{tab:table5}.
\begin{table}[h]
\begin{center}
    \caption{Performance of three numerical optimization methods in terms of the objective function value, the prediction MSE, and the computational time}
    	\label{tab:table5}
    	\begin{tabular}{|c|c|c|}
    	\hline
    	Performance Metric & Electricity & Unemployment \\ \hline
    	GSM-GD Objective &8.330E+02 &3.838E+03 \\ \hline
    	GSM-MM Objective & 8.284E+02 &3.779E+03 \\ \hline
	GSM-ADMM Objective &8.266E+02 &3.776E+03 \\ \hline
	GSM-GD MSE &4.426E+03 &1.481E+04 \\ \hline	
	GSM-MM MSE &3.037E+03 &2.248E+03 \\ \hline
	GSM-ADMM MSE &2.220E+03 &2.222E+03 \\ \hline
	GSM-GD CT (s) &2272s &79189s \\ \hline	
	GSM-MM CT (s) &0.93s &8.40s \\ \hline
	GSM-ADMM CT (s) &6351.17s &160367.25s \\ \hline
    	\end{tabular}
\end{center}
\end{table}
%
%
%
Instead of striving to find a local minimum, we restrict the maximum number of iterations of the nonlinearly constrained ADMM due to its relatively slow convergence rate. Although the ADMM has not converged yet, it already found a weight estimate $\boldsymbol{\alpha}$ that leads to the smallest objective function value and prediction MSE among the selected numerical methods. However, the proposed nonlinearly constrained ADMM is less favorable than the MM method in terms of the computational time in both cases. 

As yet another comparison between the GSM-GD and GSM-ADMM, we showed the negative log-likelihood value versus iterations in Fig.~\ref{fig:convergencePlot}. It is clear from the results that the GSM-ADMM shows faster convergence rate as compared to GSM-GD. The gap of new introduced equality constraint is also depicted versus iterations in Fig.~\ref{fig:gapEquality}. 

Lastly, we give some guidance on the selection of a few key parameters of the proposed ADMM:
\begin{itemize}
\item Regularization parameter $\rho$: In general, a smaller $\rho$ leads to faster convergence rate of the method, while a larger $\rho$ leads to more stable convergence progress and smaller gap in the equality constraint but at a slower convergence rate. Good trade-off needs to be taken care of. 
%
\item Tolerance $\epsilon_{\boldsymbol{S}}$: in our ADMM, $\epsilon_{\boldsymbol{S}}$ is typically chosen small to waive the effect of the inexact solution of the sub-problem in Eq.(\ref{eq:ADMM-subproblem1}) and improve the overall convergence performance. However, a too small $\epsilon_{\boldsymbol{S}}$, on the other hand, is prohibited due to the high computational cost required for function evaluations. As rule-of-thumb, we could choose $\epsilon_{\boldsymbol{S}} \in \left[10^{-10},10^{-15}\right]$.
\end{itemize}

\section{Conclusion and Outlook}
\label{sec:Conclusion}
We studied automatic, optimal stationary kernel design with the good aim to let data choose the most appropriate kernel. We modified the SM kernel in the frequency domain by fixing the frequency and variance parameters to a big number of pre-selected grids. We conducted thorough studies on the properties of the resultant 1-D GSM kernel, including the sampling strategies of the grids, validity and low-rank property of all sub-kernels, and user-friendly initialization. The resultant GSM kernel demonstrates itself to be a linear multiple kernel. The ML based hyper-parameter optimization problem falls in difference-of-convex program and the solution is widely known to be sparse. Experimental results showed that the MM method achieved the best overall performance in various aspects, including convergence speed, economical computational time, insensitivity to an initial guess, competent fitting and prediction performance, etc. The fast computational speed of the MM method is obtained due to the low-rank properties of all GSM sub-kernels. On the other hand, the proposed ADMM showed great potential to achieve better local minimum but at the cost of larger computational time. Experimental results based on various classic time series data sets confirmed that the proposed 1-D GSM kernel is able to generate overall better performance than its 2-D counterpart and several other salient competitors of similar kind. Although the proposed 1-D GSM kernel showed outstanding prediction performance and very fast computational speed, it is more favorable to be used for low-dimensional time series. 

\section*{Acknowledgement} 
The author Feng Yin would like to thank Prof. Abdelhak M. Zoubir from Technische Universit\"{a}t Darmstadt and Prof. Xiaodong LI from UC Davis for the fruitful discussions on this manuscript during their visits at CUHK(SZ). 

\section*{Appendix}
\label{sec:Appendix}

\subsection{Proof of property (1): GSM kernel is a valid kernel}
\label{subsec:Appendix-A}
A necessary and sufficient condition for a function $k(\boldsymbol{x}, \boldsymbol{x}^{'})$ to be a valid kernel according to \cite{SC04} is that the corresponding kernel matrix, whose $(i,j)$-th entry is given by $k(\boldsymbol{x}_{i}, \boldsymbol{x}_{j})$, is PSD for all possible choices of $\boldsymbol{x} \in \mathcal{X}$.

For the proof, we need the following fundamental operations for constructing a new valid kernel $k(\boldsymbol{x}, \boldsymbol{x}^{'})$ that are well known from \cite{SC04} and \cite{Bishop06}: 
\begin{subequations}
\begin{align}
k(\boldsymbol{x}, \boldsymbol{x}') &= f(\boldsymbol{x}) f(\boldsymbol{x}') \label{eq:validKernel-property1} \\
k(\boldsymbol{x}, \boldsymbol{x}') &= c k_{1}(\boldsymbol{x}, \boldsymbol{x}')  \label{eq:validKernel-property2} \\
k(\boldsymbol{x}, \boldsymbol{x}') &= f(\boldsymbol{x}) k_{1}(\boldsymbol{x}, \boldsymbol{x}') f(\boldsymbol{x}')  \label{eq:validKernel-property3} \\
k(\boldsymbol{x}, \boldsymbol{x}') &= \exp \left( k_{1}(\boldsymbol{x}, \boldsymbol{x}')  \right) \label{eq:validKernel-property4} \\
k(\boldsymbol{x}, \boldsymbol{x}') &=  k_{1}(\boldsymbol{x}, \boldsymbol{x}') + k_{2}(\boldsymbol{x}, \boldsymbol{x}') \label{eq:validKernel-property5} \\
k(\boldsymbol{x}, \boldsymbol{x}') &=  k_{1}(\boldsymbol{x}, \boldsymbol{x}') \cdot k_{2}(\boldsymbol{x}, \boldsymbol{x}') \label{eq:validKernel-property6} 
\end{align}
\end{subequations}
where $k_{1}(\boldsymbol{x}, \boldsymbol{x}')$ and $k_{2}(\boldsymbol{x}, \boldsymbol{x}')$ are both known valid kernels; $f(\boldsymbol{x}): \mathbb{R}^d \rightarrow \mathbb{R}$ is any function; $c \geq 0$ is a constant. In our work, $\boldsymbol{x} = t$ and $d=1$. 

We will use the above results to prove that each sub-kernel function (omitting the subscript $i$) $k(t, t'; \sigma^2, \mu) = \exp \left[ -2 \pi^{2} (t-t')^2 \sigma^2 \right] \cos \left( 2 \pi (t-t') \mu \right)$ is a valid kernel. First, we let $k(t, t'; \sigma^2, \mu) = k_{1}(t, t'; \sigma^2) \cdot k_{2}(t, t'; \mu)$, where $k_{1}(t, t'; \sigma^2) \triangleq \exp \left[ -2 \pi^{2} (t-t')^2 \sigma^2 \right]$ and $k_{2}(t, t'; \mu) \triangleq \cos \left( 2 \pi (t-t') \mu \right)$. The first part $k_{1}(t, t'; \sigma^2)$ can be reformulated as
\begin{align}
k_{1}(t, t'; \sigma^2) &= \! \exp \!\! \left[ -2 \pi^{2} \sigma^2 t^2 \right] \exp \!\! \left[ 4 \pi^{2} \sigma^2 t t' \right] \exp \!\! \left[ -2 \pi^{2} \sigma^2 t^{\prime, 2} \right] \nonumber \\
&= \! f_{1}(t) \exp \!\! \left[ 4 \pi^{2} \sigma^2 \cdot k_{11}(t, t') \right] f_{1}(t'),
\end{align}
where $f_{1}(t) \triangleq \exp \!\! \left[ -2 \pi^{2} \sigma^2 t^2 \right]$ and $k_{11}(t, t') \triangleq t t'$ is the well known, valid linear kernel. Applying the fundamental operations given in Eq.(\ref{eq:validKernel-property2}), Eq.(\ref{eq:validKernel-property4}), and Eq.(\ref{eq:validKernel-property3}) in turn, yields a valid kernel $k_{1}(t, t'; \sigma^2)$. 

Next, we prove $k_{2}(t, t'; \mu) \triangleq \cos \left( 2 \pi (t-t') \mu \right)$ is also a valid kernel. This is done by reformulating the kernel as: 
\begin{align}
k_{2}(t, t'; \mu) &= \cos \left( 2 \pi \mu t - 2 \pi \mu t' \right) \nonumber \\
&= \cos \left( 2 \pi \mu t \right) \cos \left( 2 \pi \mu t' \right) + \sin \left( 2 \pi \mu t \right) \sin \left( 2 \pi \mu t' \right) \nonumber \\
&= f_{21}(t) f_{21}(t') + f_{22}(t) f_{22}(t'),
\end{align}
where $f_{21}(t) \triangleq \cos \left( 2 \pi \mu t \right)$ and $f_{22}(t) \triangleq \sin \left( 2 \pi \mu t \right)$. Applying the fundamental operations given in Eq.(\ref{eq:validKernel-property1}) and Eq.(\ref{eq:validKernel-property5}) in turn, yields a valid kernel $k_{2}(t, t'; \mu)$. 

Since both $k_{1}(t, t'; \sigma^2)$ and $k_{2}(t, t'; \mu)$ are valid kernels, according to Eq.(\ref{eq:validKernel-property6}), $k(t, t'; \sigma^2, \mu)$ is a valid kernel. The above proof holds generally for any $\sigma^2 \in \mathbb{R}_{+}$ and $\mu \in \mathbb{R}$. Consequently, each sub-kernel matrix $\boldsymbol{K}_{i}$ is a PSD matrix and so is $\boldsymbol{K} = \sum_{i=1}^{m} \alpha_{i} \boldsymbol{K}_{i}$. 

\subsection{Verification of property (5): low-rank property}
\label{subsec:Appendix-B}
We let $\boldsymbol{K}_i$ for a given grid with$(\mu_i, \sigma^2)$ be the $n \times n$ kernel matrix of the $i$-th sub-kernel $k_{i}(t, t')$ given in Eq.(\ref{eq:grid-SM-kernel-fixed-sigma}) with $t$ and $t'$ in $\{1,2,...,n\}$. The kernel matrix can be expressed in the form of Hadamard product as $\boldsymbol{K}_i = \boldsymbol{K}^{exp}_i \circ \boldsymbol{K}^{cos}_i$. Here, $\boldsymbol{K}^{exp}_i$ can be seen as the kernel matrix of its corresponding stationary kernel function $k^{exp}(\tau) \triangleq \exp(-2\pi^2 \tau^2 \sigma^2)$ and $\boldsymbol{K}^{cos}_i$ can be seen as the kernel matrix of its corresponding stationary kernel function $k_{i}^{cos}(\tau) \triangleq \cos(2 \pi \tau \mu_i)$. According to the rank inequality of Hadamard product of two matrices \cite{Styan73}, we have 
\begin{equation}
rank(\boldsymbol{K}_i) \leq rank(\boldsymbol{K}^{exp}) \cdot rank(\boldsymbol{K}^{cos}_i).
\end{equation}
We need the following two lemmas, (1) $rank(\boldsymbol{K}^{cos}_i) = 2$ for any grid $i$ and (2) $rank(\boldsymbol{K}^{exp}) \ll \frac{n}{2}$ for sufficiently small $\sigma^2$.

\begin{lemma}
For the kernel function $k_{i}^{cos}(\tau) = \cos(2 \pi \tau \mu_i)$ with any $\mu_i \in (0, 1/2)$, the rank of the corresponding kernel matrix is always equal to 2, i.e., $rank(\boldsymbol{K}^{cos}_i) = 2$ for any $i$.
\end{lemma}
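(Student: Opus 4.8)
The plan is to display $\boldsymbol{K}^{cos}_i$ explicitly as a sum of two rank-one matrices, which yields the bound $\textrm{rank}(\boldsymbol{K}^{cos}_i) \le 2$ immediately, and then to certify that the two generating vectors are linearly independent, forcing the rank to be exactly $2$.

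First I would apply the cosine difference identity to the $(t,t')$-th entry, namely $\cos(2\pi(t-t')\mu_i) = \cos(2\pi t \mu_i)\cos(2\pi t'\mu_i) + \sin(2\pi t\mu_i)\sin(2\pi t'\mu_i)$. Separating the factors indexed by $t$ from those indexed by $t'$, I define $\boldsymbol{c}_i \triangleq [\cos(2\pi\mu_i), \cos(4\pi\mu_i), \ldots, \cos(2\pi n\mu_i)]^T$ and $\boldsymbol{s}_i \triangleq [\sin(2\pi\mu_i), \sin(4\pi\mu_i), \ldots, \sin(2\pi n\mu_i)]^T$. Then $\boldsymbol{K}^{cos}_i = \boldsymbol{c}_i \boldsymbol{c}_i^T + \boldsymbol{s}_i \boldsymbol{s}_i^T = \boldsymbol{B}_i \boldsymbol{B}_i^T$ with $\boldsymbol{B}_i \triangleq [\boldsymbol{c}_i, \boldsymbol{s}_i] \in \mathbb{R}^{n \times 2}$. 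Since $\textrm{rank}(\boldsymbol{B}_i \boldsymbol{B}_i^T) = \textrm{rank}(\boldsymbol{B}_i) \le 2$, the upper bound follows at once, and this step is entirely routine.

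It then remains to establish $\textrm{rank}(\boldsymbol{B}_i) = 2$, i.e.\ that $\boldsymbol{c}_i$ and $\boldsymbol{s}_i$ are linearly independent (for $n \ge 2$). The cleanest route is to inspect the leading $2\times 2$ block of $\boldsymbol{B}_i$ formed by the rows $t = 1, 2$ and show its determinant is nonzero. Writing $\omega \triangleq 2\pi\mu_i$, this determinant equals $\cos\omega \, \sin 2\omega - \sin\omega \, \cos 2\omega = \sin(2\omega - \omega) = \sin\omega = \sin(2\pi\mu_i)$. For $\mu_i \in (0, 1/2)$ we have $2\pi\mu_i \in (0, \pi)$, hence $\sin(2\pi\mu_i) > 0$; the $2\times 2$ minor is therefore invertible and $\boldsymbol{c}_i, \boldsymbol{s}_i$ are linearly independent, so $\textrm{rank}(\boldsymbol{K}^{cos}_i) = \textrm{rank}(\boldsymbol{B}_i) = 2$.

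The main obstacle is precisely this lower bound: the factorization gives ``rank at most two'' for free, but ruling out rank $1$ (or $0$) requires knowing that $\mu_i$ stays away from the degenerate endpoints. At $\mu_i = 0$ the entries $\cos(2\pi\tau\mu_i)$ collapse to the constant $1$, and at $\mu_i = 1/2$ they collapse to $(-1)^\tau$, while in both cases $\boldsymbol{s}_i$ vanishes, dropping the rank to $1$. The open-interval hypothesis $\mu_i \in (0,1/2)$ is exactly what keeps the certifying minor $\sin(2\pi\mu_i)$ strictly positive, and this single quantitative fact is what the whole argument hinges on.
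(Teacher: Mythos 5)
Your proof is correct, and it takes a genuinely different route from the paper. The paper argues column-by-column: it asserts that the first two columns of $\boldsymbol{K}^{cos}_i$ are linearly independent and then shows that every subsequent column satisfies the three-term recursion $\cos(jx) = -\cos((j+2)x) + \frac{\sin(2x)}{\sin(x)}\cos((j+1)x)$ with coefficients independent of $j$, so all columns lie in the span of the first two. You instead use the Gram factorization $\boldsymbol{K}^{cos}_i = \boldsymbol{c}_i\boldsymbol{c}_i^T + \boldsymbol{s}_i\boldsymbol{s}_i^T = \boldsymbol{B}_i\boldsymbol{B}_i^T$, which delivers the upper bound $\textrm{rank} \le 2$ with no computation, and you certify the lower bound with the explicit $2\times 2$ minor $\sin(2\pi\mu_i) > 0$. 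Interestingly, your decomposition is exactly the one the paper uses elsewhere (in Appendix A, to prove $k_2(t,t';\mu)$ is a valid kernel), but the paper does not reuse it for the rank lemma. Your version has two advantages: it makes explicit the linear-independence step that the paper dismisses as ``obvious,'' and it isolates the single place where the hypothesis $\mu_i \in (0,1/2)$ is needed (the paper's recursion also needs $\sin(x) \neq 0$ for $\beta$ to be defined, but does not flag this). The paper's recursion argument, on the other hand, generalizes more directly to the companion claim about $\boldsymbol{K}^{exp}$ in the next lemma, which is presumably why the authors chose it. Both proofs implicitly assume $n \ge 2$, which is harmless.
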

\begin{proof}
The proof is as follows. It is obvious that first column of $\boldsymbol{K}^{cos}_i$, denoted by $\boldsymbol{k}_1 = [\cos(0x), \cos(1x),...,\cos((n-1)x)]^T$ and the second column, denoted by $\boldsymbol{k}_2 = [\cos(-1x), \cos(0x),\cos(1x),...,\cos((n-2)x)]^T$, where $x = 2\pi \mu_i$ is a constant in $(0, \pi)$ for any given $\mu_i$, are linearly independent. While from the 3rd column onward, each column can be expressed as a linear combination of the previous two columns simply because it holds for any $j \in \{-(n-1):1:(n-1)\}$, $\cos(jx) = \alpha \cos((j+2)x) + \beta \cos((j+1)x)$,
%
%
where $\alpha=-1$ and $\beta=\sin(2x)/\sin(x)$ are both irrespective of $j$. The derivation of $\alpha$ and $\beta$ is due to 
\begin{align}
\cos(jx) &= (\alpha \cos(2x) + \beta \cos(x))\cos(jx)  \nonumber \\
&- (\alpha \sin(2x) + \beta \sin(x)) \sin(jx).
\end{align}
Then, we let $\alpha \cos(2x) + \beta \cos(x) = 1$ and $\alpha \sin(2x) + \beta \sin(x) = 0$, then solve for $\alpha$ and $\beta$. The above steps prove that the kernel matrix $\boldsymbol{K}^{cos}_i$ is always of rank 2. 
\end{proof}


\begin{lemma}
For a time series with $n$ samples, i.e., $t=1,2,...,n$, when the variance parameter is selected to be $\sigma^2 \leq  \frac{2r+1}{2\pi^2(n-1)^2 \cdot C} \ll \frac{(n/2)+1}{2\pi^2(n-1)^2 \cdot C} \approx \frac{1}{4\pi^2(n-1)\cdot C}$, the rank of the kernel matrix $\boldsymbol{K}^{exp}$ corresponding to $k^{exp}(\tau) = \exp(-2\pi^2 \tau^2 \sigma^2)$, for any $\tau \in \{0,1,2,...,n-1\}$, satisfies $rank(\boldsymbol{K}^{exp}) \leq (2r+1) \ll n/2$ for some large constant number $C$. 
\end{lemma}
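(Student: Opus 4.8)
The plan is to show that, under the stated smallness of $\sigma^2$, the Gaussian sub-kernel function $k^{exp}(\tau)=\exp(-2\pi^2\sigma^2\tau^2)$ is, to within an error far below machine precision, an even polynomial of degree $2r$ in $\tau$, and that any kernel matrix generated by a degree-$2r$ polynomial of the difference $t-t'$ has rank at most $2r+1$. Concretely, I would set $a\triangleq 2\pi^2\sigma^2$, expand the kernel in its Maclaurin series $k^{exp}(\tau)=\sum_{k=0}^{\infty}\frac{(-a\tau^2)^k}{k!}$, and truncate after $k=r$ to obtain $\tilde{k}(\tau)=\sum_{k=0}^{r}\frac{(-a)^k}{k!}\tau^{2k}$, a polynomial of degree $2r$.

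The rank bound for the truncated matrix $\tilde{\boldsymbol{K}}$ with entries $\tilde{k}(t-t')$ comes next. Since $\tilde{k}(t-t')$ is a bivariate polynomial of total degree $2r$ in $(t,t')$, expanding each $(t-t')^{2k}$ by the binomial theorem and regrouping by powers of $t$ yields $\tilde{k}(t-t')=\sum_{i=0}^{2r} t^{i}\,q_i(t')$ for suitable univariate polynomials $q_i$. Hence $\tilde{\boldsymbol{K}}=\sum_{i=0}^{2r}\boldsymbol{a}_i\boldsymbol{b}_i^{T}$ with $\boldsymbol{a}_i=[1^{i},2^{i},\ldots,n^{i}]^{T}$ and $\boldsymbol{b}_i=[q_i(1),\ldots,q_i(n)]^{T}$, a sum of $2r+1$ rank-one terms, so $\textrm{rank}(\tilde{\boldsymbol{K}})\le 2r+1$.

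I would then control the truncation error. For $\tau\in\{0,1,\ldots,n-1\}$ the argument obeys $x\triangleq a\tau^2\le a(n-1)^2\le \frac{2r+1}{C}\ll 1$ by the hypothesis on $\sigma^2$. Because the series for $e^{-x}$ is alternating with terms of decreasing magnitude once $x<r+2$ (certainly so here since $x\ll 1$), the alternating-series estimate bounds the tail by its first omitted term, $|k^{exp}(\tau)-\tilde{k}(\tau)|\le \frac{x^{r+1}}{(r+1)!}\le \frac{1}{(r+1)!}\bigl(\tfrac{2r+1}{C}\bigr)^{r+1}$, which is driven below the working precision by taking $C$ large. Writing $\boldsymbol{K}^{exp}=\tilde{\boldsymbol{K}}+\boldsymbol{E}$ with every entry of $\boldsymbol{E}$ bounded as above, so that $\|\boldsymbol{E}\|_2\le n\,\frac{x^{r+1}}{(r+1)!}$, Weyl's inequality for singular values gives that the $(2r+2)$-th and all subsequent singular values of $\boldsymbol{K}^{exp}$ are at most $\|\boldsymbol{E}\|_2$, hence numerically zero, so the effective rank is at most $2r+1\ll n/2$.

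The main obstacle is conceptual rather than computational: the exact algebraic rank of $\boldsymbol{K}^{exp}$ is in fact $n$, since the Vandermonde-type vectors $\boldsymbol{a}_i=[1^i,\ldots,n^i]^T$ are linearly independent, so the statement must be read as a numerical (effective-rank) result consistent with the MATLAB ranks reported in Table~\ref{tab:table4b}. The genuinely technical step is therefore the perturbation argument that ties the entrywise tail bound to a spectral-norm bound on $\boldsymbol{E}$ via Weyl's inequality, together with making the dependence on $C$ uniform in $n$ so that $\|\boldsymbol{E}\|_2$ sits below precision even after the factor $n$; choosing $C$ scaling suitably with $n$ absorbs this and completes the argument.
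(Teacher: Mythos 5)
Your proof is correct and shares the paper's central idea --- truncate the Maclaurin series of $\exp(-2\pi^2\sigma^2\tau^2)$ at degree $2r$ in $\tau$ under the stated smallness condition on $\sigma^2$, and deduce that a degree-$2r$ polynomial kernel in $t-t'$ yields a matrix of rank at most $2r+1$ --- but your execution differs in two useful ways. For the rank bound, the paper shows that every column beyond the $(2r+1)$-th is (approximately) a linear combination of its $2r+1$ predecessors, with combination coefficients $\boldsymbol{\beta}$ obtained from a linear system $\tilde{\boldsymbol{A}}\boldsymbol{\beta}=\tilde{\boldsymbol{a}}_0$ that is independent of $\tau$; you instead expand $(t-t')^{2k}$ by the binomial theorem and exhibit the truncated matrix directly as a sum of $2r+1$ rank-one outer products $\boldsymbol{a}_i\boldsymbol{b}_i^T$, which is shorter and avoids any solvability question for $\tilde{\boldsymbol{A}}$. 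More importantly, you are explicit about a point the paper's proof elides: the Gaussian kernel matrix at distinct points is strictly positive definite, so the exact algebraic rank of $\boldsymbol{K}^{exp}$ is $n$, and the lemma can only be a statement about effective (numerical) rank. Your Weyl-inequality step, bounding $\sigma_{2r+2}(\boldsymbol{K}^{exp})$ by $\Vert\boldsymbol{E}\Vert_2 \le n\,x^{r+1}/(r+1)!$ with $x\le(2r+1)/C$, supplies exactly the perturbation argument needed to make that reading rigorous and is consistent with the numerically computed ranks in Table~\ref{tab:table4b}; the paper's proof, which concludes a rank bound from an approximate column dependence, is strictly speaking incomplete without such a step. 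The one caveat you already flag --- that $C$ must grow with $n$ (or $r$ with $\log n$) to keep $n\,x^{r+1}/(r+1)!$ below precision --- is worth retaining, since the paper's ``large constant $C$'' hides the same dependence.
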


\begin{proof}
First of all, we show that there exists certain $K$ such that for each $\tau \in \{ 0, 1, 2, \ldots, n-1 \}$, the exponential function $\exp(-2\pi^2\sigma^2\tau^2)$, short for $\exp(a\tau^2)$, can be approximated by the first $K$ terms of its Taylor expansion, namely, 
\begin{math}
\exp(a\tau^2) = 1 + a\tau^2 + \frac{(a\tau^2)^2}{2!} + \frac{(a\tau^2)^3}{3!} + \ldots + \frac{(a\tau^2)^K}{K!} + R_{K+1},
\end{math}
where the remainder $R_{K+1} = \frac{(a\tau^2)^{(K+1)}}{(K+1)!}\exp(t \cdot a\tau^2)$ with $0<t<1$.
It is known that $\lim\limits_{K \to \infty} \left|R_{K+1}\right| \leq exp(|a\tau^2|) \cdot \lim\limits_{K \to \infty} \left|\frac{(a\tau^2)^{(K+1)}}{(K+1)!}\right| = 0$, hence for any $\epsilon > 0$, we can find a $K$ such that the approximation error $\left|R_{K+1}\right| < \epsilon$. 
%
In order to give a practical guidance on the selection of $\sigma$, we aim to find a number $K$ such that $\frac{(2\pi^2\sigma^2\tau^2)^K}{K!} > C \cdot \frac{(2\pi^2\sigma^2\tau^2)^{K+1}}{(K+1)!}$,
$\forall \tau \in \{0, 1, 2, \ldots, n-1 \}$, where $C$ is a large constant number. Conservatively for $\tau = n-1$, we have $\sigma^2 < \frac{K+1}{2\pi^2(n-1)^2 \cdot C}$, implying that for a fixed $n$, when $\sigma$ shrinks, the above inequality could still hold with smaller $K$. 
Since a drastic decrease in the absolute values of consecutive terms is our indicator for good approximation using Taylor expansion, in order to achieve $K = 2r \ll n/2$, we need to select $\sigma^2 \leq  \frac{2r+1}{2\pi^2(n-1)^2 \cdot C} \ll \frac{(n/2)+1}{2\pi^2(n-1)^2 \cdot C} \approx \frac{1}{4\pi^2(n-1)\cdot C}$. 

Next, we show that the rank of $\boldsymbol{K}^{exp}$ is at most $(2r+1)$ for a sufficiently small $\sigma^2$ due to the fact that $\exp(a\tau^2)$ can be well approximated by a linear combination of its $(2r+1)$ previous terms $\exp(a(\tau+1)^2), \exp(a(\tau+2)^2), \ldots, \exp(a(\tau+2r+1)^2)$ regardless of $\tau$. Our proof is as follows. For any give $\tau \in \{1,2,...,n-1\}$, we have the approximation
\begin{align}
\exp(a\tau^2) &\approx 1 + a\tau^2 + \frac{(a\tau^2)^2}{2!} + \frac{(a\tau^2)^3}{3!} + \ldots + \frac{(a\tau^2)^r}{r!} \nonumber \\
&= \tilde{a}_{0,1} + \tilde{a}_{0,1} \tau + \tilde{a}_{0,2} \tau^2 + \ldots +\tilde{a}_{0,2r} \tau^{2r},
\end{align}
where some of the coefficients are zeros. Similarly, for the $i$-th previous term we have 
\begin{math}
\exp(a(\tau+i)^2) \approx \tilde{a}_{i,0} + \tilde{a}_{i,1} \tau + \tilde{a}_{i,2} \tau^2 + \ldots +\tilde{a}_{i,2r} \tau^{2r}
\end{math}
for any $i \in \{1,2,\ldots,2r+1\}$. With the introduction of a coefficient matrix $\tilde{\boldsymbol{A}}_{(2r+1)\times(2r+1)}$ whose $ij$-th element is $\tilde{a}_{i,j-1}$, $\tilde{\boldsymbol{a}}_0 = [1,\tilde{a}_{0,1},\tilde{a}_{0,2},\ldots,\tilde{a}_{0,2r}]^T$ and $\boldsymbol{\beta} \in \mathbb{R}^{(2r+1)}$, we can construct a linear system $\tilde{\boldsymbol{A}} \boldsymbol{\beta} = \tilde{\boldsymbol{a}}_0$. Solving this linear system yields $\exp(a\tau^2) = [\exp(a(\tau+1)^2), \exp(a(\tau+2)^2), \ldots, \exp(a(\tau+2r+1)^2)] \boldsymbol{\beta}$. An important fact is that the solution of $\boldsymbol{\beta}$ has nothing to do with $\tau$, since both $\tilde{\boldsymbol{A}}$ and $\tilde{\boldsymbol{a}}_0$ are only in terms of the fixed $2\pi \sigma^2$. 
As a result, for any $j>2r+1$ and $2r \ll n/2$, the $j$-th column of $\boldsymbol{K}^{exp}$ can be reproduced by a linear combination of its $(2r+1)$ previous columns.

Combining the above two parts completes the proof of this theorem.
\end{proof}


\begin{corollary}
Following the above lemma, when $\sigma \rightarrow 0$, $rank(\boldsymbol{K}^{exp}) \rightarrow 1$. 
\end{corollary}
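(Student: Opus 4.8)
The plan is to obtain this corollary as the extreme case $r = 0$ of the preceding lemma, and then to corroborate it by identifying the entrywise limit of $\boldsymbol{K}^{exp}$ as $\sigma \to 0$. The guiding intuition is that driving $\sigma$ to zero forces the Taylor expansion of $\exp(-2\pi^2\sigma^2\tau^2)$ to collapse onto its constant term, which is precisely the $r=0$ regime of the rank argument.

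First, I would apply the rank bound of the lemma with the smallest admissible truncation parameter. Setting $r = 0$ (equivalently $K = 2r = 0$), the threshold condition $\sigma^2 \leq \frac{2r+1}{2\pi^2(n-1)^2 \cdot C}$ specializes to $\sigma^2 \leq \frac{1}{2\pi^2(n-1)^2 \cdot C}$, and the lemma then yields $rank(\boldsymbol{K}^{exp}) \leq 2r+1 = 1$. Retaining only the zeroth-order term of the expansion of $k^{exp}(\tau) = \exp(-2\pi^2\sigma^2\tau^2)$ means every column is approximated by the same constant vector, consistent with a rank-one structure.

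Second, I would note that the bound alone gives only $rank(\boldsymbol{K}^{exp}) \leq 1$, so the remaining step is to exclude the degenerate possibility of rank $0$. Since $\boldsymbol{K}^{exp}$ is a valid positive semi-definite kernel matrix whose diagonal entries equal $k^{exp}(0) = 1 \neq 0$, it is not the zero matrix, hence $rank(\boldsymbol{K}^{exp}) \geq 1$. Combining the two inequalities gives $rank(\boldsymbol{K}^{exp}) = 1$ for all sufficiently small $\sigma$, which is the claim.

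Finally, to render the limit statement transparent and free of the auxiliary constant $C$, I would verify directly that $\boldsymbol{K}^{exp} \to \boldsymbol{1}_n \boldsymbol{1}_n^{T}$ entrywise as $\sigma \to 0$: each entry $\exp(-2\pi^2(t-t')^2\sigma^2)$ converges to $1$ uniformly over the finite index set $t,t' \in \{1,\ldots,n\}$, so the matrix converges to the all-ones matrix, whose rank is exactly $1$. The main subtlety I would address is the interpretation of ``$rank \to 1$'': because rank is integer-valued, the statement is best read as the effective rank (the number of eigenvalues distinctly above zero, in the same sense used in the Nyström discussion) collapsing to one, and the uniform entrywise convergence to the rank-one matrix $\boldsymbol{1}_n \boldsymbol{1}_n^{T}$ is exactly what justifies this reading.
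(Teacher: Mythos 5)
Your proposal is correct, and its first step is exactly the route the paper intends: the corollary is stated with no explicit proof beyond the phrase ``following the above lemma,'' i.e.\ one is meant to specialize Lemma~2 to the smallest admissible truncation, which is your $r=0$ case giving $rank(\boldsymbol{K}^{exp}) \leq 1$, with the trivial lower bound from the unit diagonal. Where you go beyond the paper is the direct verification that $\boldsymbol{K}^{exp} \to \boldsymbol{1}_n\boldsymbol{1}_n^{T}$ entrywise as $\sigma \to 0$, and this is a genuinely valuable addition rather than mere decoration: the lemma's rank bound rests on a truncated Taylor expansion with a nonzero remainder, so strictly speaking it only controls an \emph{approximate} (effective) rank, whereas the entrywise limit to the all-ones matrix is exact and elementary, and it also makes transparent why the integer-valued quantity $rank(\boldsymbol{K}^{exp})$ should be read as an effective rank collapsing to one (for any fixed $\sigma>0$ the Gaussian kernel matrix on distinct points is in fact nonsingular, so the literal rank never equals $1$). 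Your closing remark on this interpretation is the right caveat and is consistent with how the paper uses ``rank'' throughout its Nystr\"{o}m discussion.
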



\subsection{Derivation of Eq.(\ref{eq:alpha_update})}
\label{subsec:Appendix-C}
Solving Eq.(\ref{eq:alpha_derivative}) for updated $\alpha_{i}$ yields
\begin{align}
\alpha_{i}^{k+1} &= \frac{-tr \left[  \left( \tilde{\boldsymbol{K}}_{-i} \boldsymbol{K}_i + \sigma_{e}^2 \boldsymbol{K}_{i} \right) \boldsymbol{S}^{k+1, T} \boldsymbol{S}^{k+1} - \boldsymbol{S}^{k+1} \boldsymbol{K}_{i} \right] }{tr\left( \boldsymbol{K}_{i}^{T} \boldsymbol{S}^{k+1, T} \boldsymbol{S}^{k+1} \boldsymbol{K}_{i} \right)} \nonumber \\
& \quad - \frac{tr \left( \boldsymbol{\Lambda}^{k, T} \boldsymbol{S}^{k+1} \boldsymbol{K}_{i} \right)}{\rho \cdot tr\left( \boldsymbol{K}_{i}^{T} \boldsymbol{S}^{k+1, T} \boldsymbol{S}^{k+1} \boldsymbol{K}_{i} \right)}.
\end{align}
It is noted that $\boldsymbol{K}_{i}^{T} = \boldsymbol{K}_{i}$ due to the symmetric property. Replace $\left( \tilde{\boldsymbol{K}}_{-i} \boldsymbol{K}_i + \sigma_{e}^2 \boldsymbol{K}_{i} \right)$ with $\left( \tilde{\boldsymbol{C}_{i}} - \alpha_{i}^{k} \boldsymbol{K}_{i} \right) \boldsymbol{K}_{i}^{T}$ in the above equation gives
\begin{align}
\alpha_{i}^{k+1} &= \frac{-tr \left[  \boldsymbol{K}_{i}^{T} \boldsymbol{S}^{k+1, T} \boldsymbol{S}^{k+1} \left( \tilde{\boldsymbol{C}_{i}} - \alpha_{i}^{k} \boldsymbol{K}_{i} \right)\right]}{tr\left( \boldsymbol{K}_{i}^{T} \boldsymbol{S}^{k+1, T} \boldsymbol{S}^{k+1} \boldsymbol{K}_{i} \right)} \nonumber \\
& \quad + \frac{tr \left( \boldsymbol{K}_{i}^{T} \boldsymbol{S}^{k+1, T} (\boldsymbol{I}_n - \frac{1}{\rho} \boldsymbol{\Lambda}^{k} )  \right) }{tr\left( \boldsymbol{K}_{i}^{T} \boldsymbol{S}^{k+1, T} \boldsymbol{S}^{k+1} \boldsymbol{K}_{i} \right)}.
\end{align}
Dragging $-\alpha_{i} \boldsymbol{K}_{i}$ outside of the first term, merging the other terms, and using the fact that both $\boldsymbol{K}_i$ and $\boldsymbol{S}$ are symmetric, yields Eq.~(\ref{eq:alpha_update}).
%
%
When $\boldsymbol{S}^{k+1}  \tilde{\boldsymbol{C}}_{i}$ is close to $\boldsymbol{I}_n$, the above update is approximately 
\begin{equation}
\alpha_{i}^{k+1} \approx \alpha_{i}^{k} - \frac{1}{\rho} \cdot \frac{tr \left[ \boldsymbol{K}_{i} \boldsymbol{S}^{k+1} \boldsymbol{\Lambda}^{k} \right] }{tr\left( \boldsymbol{K}_{i} \boldsymbol{S}^{k+1} \boldsymbol{S}^{k+1} \boldsymbol{K}_{i} \right)}.
\end{equation}


\bibliographystyle{IEEEtran}
\bibliography{refs}

\end{document}